\newtheorem{example}{Example}
\newtheorem{theorem}{Theorem}
\newtheorem{lemma}[theorem]{Lemma}
\newtheorem{proposition}[theorem]{Proposition}
\newtheorem{remark}[theorem]{Remark}
\newtheorem{corollary}[theorem]{Corollary}
\newtheorem{definition}[theorem]{Definition}
\def\dmin{\displaystyle\min}
\def\dmax{\displaystyle\max}
\def\dsum{\displaystyle\sum }
\def\dsup{\displaystyle\sup }
\def\dprod{\displaystyle\prod }
\def\diint{\displaystyle\iint }
\def\begeqn{\begin{equation}}
\def\endeqn{\end{equation}}
\def\begth{\begin{theorem}}
\def\endth{\end{theorem}}
\def\begprop{\begin{proposition}}
\def\endprop{\end{proposition}}
\def\begcor{\begin{corollary}}
\def\endcor{\end{corollary}}
\def\begdef{\begin{definition}}
\def\enddef{\end{definition}}
\def\beglemm{\begin{lemma}}
\def\endlemm{\end{lemma}}
\def\begexm{\begin{example}}
\def\endexm{\end{example}}
\def\begrem{\begin{remark}}
\def\endrem{\end{remark}}
\def\beg{\begin}
\def\gd{\delta}
\def\gl{\lambda}
\def\gs{\sigma}
\def\gO{\Omega}
\def\bz{{\bf z}}
\def\O{\mathcal{O}}
\def\S{\mathbb{S}}
\def\N{\mathbb{N}}
\def\R{\mathbb{R}}
\def\X{\mathcal{X}}
\def\Y{\mathcal{Y}}
\def\Z{\mathcal{Z}}
\def\E{\mathcal{E}}
\def\C{\mathcal{C}}
\def\E{\mathcal{E}}
\def\F{\mathcal{F}}
\def\EX{{\mathbb{E}}}
\def\beg{\begin}
\def\Tr{\hbox{\bf Tr}}
\def\gl{\lambda}
\def\R{\mathbb{R}}
\def\T{\mathcal{T}}
\def\wE{\widetilde{\mathcal{E}}}
\def\wR{\widetilde{R}}
\def\wX{\widetilde{X}}
\def\wM{\widetilde{M}}
\def\wb{\widetilde{b}}
\def\lE{\overline{\mathcal{E}}}
  \title{Generalisation Bounds for Metric and Similarity  Learning
\footnote{Corresponding author: Yiming Ying.  ~Email: y.ying@exeter.ac.uk}}
\author{Qiong Cao,  Zheng-Chu Guo and Yiming Ying\\\\
College   of Engineering, Mathematics and Physical Sciences\\  University of Exeter, Harrison Building, EX4 4QF,  UK}
\date{}
\begin{document}

\maketitle

\begin{abstract}
Recently, metric learning and similarity learning have attracted a large amount of interest. Many models and optimisation algorithms have been proposed. However, there is relatively little work on the generalisation analysis of such methods. In this paper, we derive novel generalisation bounds of metric and similarity learning.  In particular, we first show that the generalisation analysis reduces to the estimation of the Rademacher average over ``sums-of-i.i.d." sample-blocks related to the specific matrix norm.  Then, we derive generalisation bounds for metric/similarity learning with different matrix-norm regularisers by estimating their specific Rademacher complexities.  Our analysis indicates that  sparse metric/similarity learning with $L^1$-norm regularisation could  lead to significantly better  bounds than those with Frobenius-norm regularisation. Our novel generalisation analysis develops and refines the techniques of U-statistics and Rademacher complexity analysis.
\end{abstract}

\section{Introduction}
The success of many machine learning algorithms (e.g. the nearest neighborhood classification
and k-means clustering) depends on the concepts of distance metric and similarity. For instance, k-nearest-neighbor (kNN) classifier depends on a distance function to identify the nearest neighbors for classification; k-means algorithms depend on the pairwise distance measurements between examples for clustering.  Kernel methods and information retrieval methods rely on a similarity measure between samples.  Many existing studies have been devoted to learning a metric or similarity automatically from data, which is usually referred to as {\em metric learning} and {\em similarity learning}, respectively.

Most work in metric learning focuses on learning a (squared) Mahalanobis distance defined, for any $x,t\in \R^d$, by $d_M(x,t) = {(x-t)M(x-t)^\top}$ where $M$ is a positive semi-definite matrix, see e.g. \cite{Bar,Davis,GR,Gold,Shen,Weinberger,Xing,Yang,Ying}.  Concurrently, the pairwise similarity defined by $s_M(x,t) = xMt^\top$ was studied in \cite{Chechik,Kar,Maurer,Chechik2}. These methods have been successfully applied to to  various real-world problems including  information retrieval and face verification \cite{Chechik,Gui,Hoi,Ying-jmlr}.
Although there are a large number of studies  devoted to supervised metric/similarity learning based on different objective functions, few studies address the generalisation analysis of such methods.   The recent work \cite{Jin} pioneered the generalisation analysis for metric learning using the concept of uniform stability \cite{Bous}.  However, this approach only works for the strongly   convex norm, e.g. the Frobenius norm,   and the  offset term is fixed which makes the generalisation analysis essentially different.

In this paper,  we develop a novel approach for generalisation analysis of metric learning and similarity learning which can deal with general matrix regularisation terms including Frobenius norm  \cite{Jin}, sparse $L^1$-norm \cite{Rosales}, mixed $(2,1)$-norm \cite{Ying} and trace-norm \cite{Ying,Shen}. In particular, we first show that the generalisation analysis for metric/similarity learning reduces to the estimation of the Rademacher average over ``sums-of-i.i.d." sample-blocks related to the specific matrix norm, which we refer to as the {\em Rademacher complexity for metric (similarity) learning}.  Then, we  show how to estimate the Rademacher complexities with different matrix regularisers.  Our analysis indicates that sparse metric/similarity learning with   $L^1$-norm regularisation could lead to significantly better generalisation bounds than that with Frobenius norm regularisation, especially when the dimension of the input data is high. This is nicely consistent with the rationale that sparse methods are more effective for  high-dimensional data analysis.
Our novel generalisation analysis develops and extends Rademacher complexity analysis \cite{BM,KPan} to the setting of metric/similarity learning by using techniques of U-statistics \cite{Clem,Gine}.

The paper is organized as follows.  The next section reviews the models of metric/similarity learning. Section \ref{sec:bounds} establishes the main theorems. In Section \ref{sec:exm}, we derive and discuss generalisation bounds for metric/similarity learning with various matrix-norm regularisation terms. Section \ref{sec:conclusion} concludes the paper.

\medskip

\noindent {\bf Notation:}  Let $\N_n = \{1,2,\ldots, n\}$
for any $n\in \N$. For any $X,Y\in \R^{d\times n}$,
$\langle X, Y\rangle = \Tr(X^\top Y)$ where $\Tr(\cdot)$ denotes
the trace  of a matrix. The space of symmetric $d$ times $d$ matrices
will be denoted by $\S^d.$  We equip $\S^d$ with a general matrix norm $\|\cdot\|$; it can be a Frobenius norm, trace-norm and mixed norm. Its associated dual norm is denoted, for any $M\in \S^d$, by $\|M\|_\ast = \sup\{ \langle X, M \rangle: X\in \S^d, \|X\|\le 1 \}.$  The Frobenius norm on matrices or vector is always denoted by $\|\cdot\|_F.$ Later on we use the conventional notation that $X_{ij}=(x_i-x_j)(x_i-x_j)^\top$ and $\widetilde{X}_{ij} = x_ix_j^\top.$

\section{Metric/Similarity Learning Formulation}\label{sec:model}
In our learning setting, we have an input space
$\X\subseteq\R^d$ and an output (label) space $\Y$. Denote  $\Z =
\X\times \Y$ and suppose $\bz := \{z_i = (x_i,y_i)\in\Z: i\in \N_n\}$ an
i.i.d. training set according to an unknown distribution $\rho$ on
$\Z.$ Denote  the $d\times n$ input data matrix by ${\bf X} = (x_i:
i\in\N_n)$ and the $d\times d$ {\em distance matrix} by $M= (M_{\ell
k})_{\ell,k\in\N_d}$. Then, the
(pseudo-) distance between $x_i$ and $x_j$ is measured by
$$d_M(x_i,x_j) = (x_i-x_j)^\top M(x_i-x_j).$$
The goal of metric learning is   to  identify a distance function
$d_M(x_i,x_j)$ such that it yields  a small value for  a similar
pair  and a large value for  a dissimilar pair. The  bilinear similarity function is defined by
$$s_M(x_i,x_j) = x_i^\top  M x_j.$$  Similarly, the target of similarity learning is to learn $M\in \S^d$ such that
it reports a large similarity value for a similar pair and  a small similarity value for a dissimilar pair.  It is worth pointing out that we do not require the  positive semi-definiteness of the matrix $M$ throughout this paper. However,  we do assume $M$ to be symmetric, since this will guarantee the distance (similarity) between $x_i$ and $x_j$  ($d_M(x_i,x_j)$) is equivalent to that  between $x_j$ and $x_i$ ($d_M(x_j,x_i)$).

There are two main terms in the  metric/similarity  learning
model: {\em empirical error} and {\em matrix regularisation term}.
The empirical error function is to employ the similarity and dissimilarity information
provided by the label information  and the appropriate matrix
regularisation term is to avoid overfitting and improve
generalisation performance.

For any pair of
samples $(x_i,x_j)$, let $r(y_i,y_j) = 1$ if $y_i = y_j$ otherwise
$r(y_i,y_{j})=-1$. It is expected that there exists an offset term $b\in \R$ such that $d_M(x_i,x_j)\le b$ for   $r(y_i,y_j) = 1$ and   $d_M(x_i,x_j)>b$ otherwise. This naturally leads to the empirical error \cite{Jin} defined by
$${1\over n(n-1)} \sum_{i,j\in \N_n,i\neq j}I[r(y_i,y_j)(d_M(x_i,x_j)-b)>0]$$
where the indicator function $I[x]$ equal $1$ if $x$ is true and zero
otherwise.

Due to the indicator function, the above empirical error is not convex which is difficult to do
optimisation.  A usual way to overcome this shortcoming is to  upper-bound it with a
convex loss function. For instance,  we can use the
the hinge loss to upper-bound the indicator function  which
leads to the following empirical error: \begeqn\label{eq:err-obj}
\mathcal{E}_\bz(M,b) := {1\over n(n-1)} \sum_{i,j\in \N_n,i\neq j}[1+r(y_i,y_j)(d_M(x_i,x_j)-b)]_+
\endeqn

In order to avoid overfitting,  we need to enforce a  regularisation term denoted by $\|M\|$, which will restrict the complexity of the distance matrix.   We emphasize here $\|\cdot\|$ denotes a general matrix norm in the linear space $\S^d$.  Putting
the regularisation term and the empirical error term together yields the following metric learning model:
\begeqn\label{eq:model} (M_\bz,b_\bz) = \arg\min_{M\in \S^d,b\in\R} \bigl\{\mathcal{E}_\bz(M,b) +\gl \|M\|^2\bigr\},
\endeqn
where $\gl>0$ is a trade-off parameter.

Different regularisation terms lead to different metric learning formulations. For instance, the Frobenius norm $\|M\|_F$ is used in \cite{Jin}.  To favor the element-sparsity, \cite{Rosales} introduced the $L^1$-norm regularisation $\|M\| = \sum_{\ell,k\in\N_d}|M_{\ell k}|.$  \cite{Ying} proposed the mixed $(2,1)$-norm $ \|M\| = \sum_{\ell\in\N_d}\bigl(\sum_{k\in\N_d}|M_{\ell k}|^2\bigr)^{1\over 2}$ to encourage the column-wise sparsity of the distance matrix. The trace-norm regularisation $\|M\| = \sum_{\ell}\gs_\ell(M)$ was also considered by \cite{Ying,Shen}. Here,  $\{\sigma_\ell: \ell\in \N_d\}$ denote the singular values of a matrix $M\in \S^d.$ Since $M$ is symmetric, the singular values of $M$ are identical to the absolute values of its eigenvalues.

In analogy to the formulation of metric learning, we consider the following empirical error for   similarity learning \cite{Maurer,Chechik}:
\begeqn\label{eq:err-obj}
\widetilde{\mathcal{E}}_\bz(M,b) := {1\over n(n-1)} \sum_{i,j\in \N_n,i\neq j}[1 -r(y_i,y_j)(s_M(x_i,x_j)-b)]_+.
\endeqn
This leads to the regularised formulation for similarity learning defined as follows:
\begeqn\label{eq:sim-model} (\widetilde{M}_\bz,\widetilde{b}_\bz) = \arg\min_{M\in \S^d,b\in\R} \bigl\{\widetilde{\mathcal{E}}_\bz(M,b) +\gl \|M\|^2\bigr\}.
\endeqn
\cite{Maurer} used the Frobenius-norm regularisation for similarity learning.
 The trace-norm regularisation has been used by \cite{Chechik2}   to encourage a low-rank similarity matrix $M.$

\section{Statistical Generalisation Analysis}\label{sec:bounds}
In this section, we mainly give a detailed proof of generalisation bounds for metric and similarity learning.
In particular,  we develop a novel line of generalisation analysis for metric and similarity
learning with general matrix regularisation terms.    The key observation is that the empirical data term $\E_\bz(M,b)$ for metric learning is a modification of U-statistics and it is expected to converge to its expected form defined by
\begeqn\label{eq:trueerror}\E(M,b) =  \diint
(1+r(y,y')(d_M(x,x')-b))_+d\rho(x,y)d\rho(x',y').\endeqn The empirical term $\wE_\bz(M,b)$ for similarity learning is expected to converge to
\begeqn\label{eq:sim-trueerror}\wE(M,b) =  \diint
(1-r(y,y')(s_M(x,x')-b))_+d\rho(x,y)d\rho(x',y').\endeqn
The target of generalisation analysis   is to  bound
the true error $\E(M_\bz,b_\bz)$ by the empirical error $\E_\bz(M_\bz,b_\bz)$ for metric learning and $\wE(\wM_\bz,\wb_\bz)$ by the empirical error $\wE_\bz(\wM_\bz,\wb_\bz)$ for similarity learning.

In the sequel, we provide a detailed proof for generalisation bounds of metric learning.  Since the proof  for similarity learning is  exactly the same as that for metric learning, we only mention the results followed with some brief  comments.

\subsection{Bounding the Solutions}

By the definition of $(M_\bz, b_\bz)$,  we know that
$$ \E_\bz(M_\bz,b_\bz) + \gl \|M_\bz\|^2 \le  \E_\bz(0,0) + \gl \|0\| = 1 $$
which implies that
    \begeqn\label{eq:M-bnd}
\|M_\bz\|\le {1\over \sqrt{\gl}}.
\endeqn
Now we turn our attention to deriving the bound of the offset term $b_\bz$ by modifying the
techniques  in \cite{Chen} which was originally  developed to estimate  the offset term of the soft-margin SVM.
\begin{lemma}\label{lemm:b-bnd} For any samples $\bz$ and $\gl>0$, there exists a minimizer $(M_\bz,b_\bz)$ of problem (\ref{eq:model}) such that \begeqn\label{eq:min-max-bnd} \min_{i\neq
j} [d_{M_\bz}(x_i,x_j) -b_\bz]\le 1, ~~  \max_{i\neq j}
[d_{M_\bz}(x_i,x_j) -b_\bz]\ge -1.\endeqn
\end{lemma}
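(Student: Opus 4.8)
The plan is to fix the matrix component $M = M_\bz$ and study the objective purely as a function of the offset $b$. Since the regulariser $\gl\|M_\bz\|^2$ does not depend on $b$, the $b$-component of any minimizer of (\ref{eq:model}) must minimize the one-variable function
\[
g(b) := \E_\bz(M_\bz, b) = {1\over n(n-1)}\sum_{i\neq j}\bigl[1 + r(y_i,y_j)(d_{M_\bz}(x_i,x_j) - b)\bigr]_+,
\]
which is convex and piecewise linear in $b$. Writing $u_{ij} = d_{M_\bz}(x_i,x_j)$, a similar pair ($r(y_i,y_j) = 1$) contributes the term $[1 + u_{ij} - b]_+$, which is non-increasing with slope $-1$ once active, whereas a dissimilar pair ($r(y_i,y_j) = -1$) contributes $[1 - u_{ij} + b]_+$, non-decreasing with slope $+1$ once active. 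Observe that the two conditions in (\ref{eq:min-max-bnd}) are together equivalent to the single sandwich $\min_{i\neq j} u_{ij} - 1 \le b_\bz \le \max_{i\neq j} u_{ij} + 1$.

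First I would examine the two tails of $g$. For $b > \max_{i\neq j} u_{ij} + 1$ every similar-pair term vanishes while every dissimilar-pair term is strictly positive, so on this half-line $g$ is affine with slope equal to the normalised number of dissimilar pairs. Hence, provided at least one dissimilar pair exists, $g$ is strictly increasing there and no minimizer can satisfy $b > \max_{i\neq j} u_{ij} + 1$; this yields the second inequality. Symmetrically, for $b < \min_{i\neq j} u_{ij} - 1$ every dissimilar-pair term vanishes and $g$ is strictly decreasing whenever a similar pair exists, forcing any minimizer to obey $b \ge \min_{i\neq j} u_{ij} - 1$ and giving the first inequality.

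Combining these, if both a similar and a dissimilar pair are present then every minimizer of $g$ already lies in the sandwich, so the $b$-component of $(M_\bz,b_\bz)$ satisfies (\ref{eq:min-max-bnd}) without modification. The only remaining cases are the degenerate ones in which all pairs are of a single type. If all pairs are similar, $g$ is non-increasing and attains its minimum exactly on $[\max_{i\neq j} u_{ij} + 1, \infty)$; here most minimizers are too large, and the key point is that we are free to \emph{select} the finite endpoint $\hat b = \max_{i\neq j} u_{ij} + 1$, which is still a minimizer and satisfies the sandwich (with equality on the right). The all-dissimilar case is handled identically by choosing $\hat b = \min_{i\neq j} u_{ij} - 1$. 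Replacing the offset of the given minimizer by $\hat b$ leaves the value of (\ref{eq:model}) unchanged, so $(M_\bz, \hat b)$ is the desired minimizer.

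I expect the main obstacle to be precisely these degenerate one-sided cases: there the minimizing set of $g$ is an unbounded half-line on which the conclusion fails for most choices of $b$, so the existential ``there exists a minimizer'' in the statement is essential and the argument must pin down the boundary minimizer rather than an arbitrary one. The generic two-sided case, by contrast, follows immediately from the strict monotonicity of $g$ in its two tails.
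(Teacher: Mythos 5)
Your proof is correct, and it rests on the same underlying mechanism as the paper's, but packaged differently and, in one respect, more carefully. The paper argues by contradiction: supposing $r=\min_{i\neq j}[d_{M_\bz}(x_i,x_j)-b_\bz]>1$, it replaces $b_\bz$ by $b_\bz+r-1$ and checks that every dissimilar-pair hinge term stays at zero while every similar-pair term strictly decreases, contradicting minimality whenever a similar pair exists (symmetrically, shifting by $r+1$ for the max inequality when a dissimilar pair exists). That is exactly your observation that $g(b)$ is strictly monotone on the two tails, expressed as a one-step perturbation rather than as a convex piecewise-linear analysis of the argmin set; on this part neither formulation buys much over the other, though yours makes the whole set of minimizing offsets visible. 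The genuine difference is in the degenerate one-type cases. The paper handles them by exhibiting an entirely different minimizer, $(M,b)=(\mathbf{0},-1)$ or $(\mathbf{0},1)$, which attains objective value $0$ when the training set consists of two examples with distinct (respectively identical) labels; you instead keep $M_\bz$ fixed and slide $b$ to the finite endpoint of the half-line of minimizers of $g$. Your treatment is cleaner and, more importantly, exhaustive: your three cases (both pair types present, all pairs similar, all pairs dissimilar) cover every sample, whereas the paper's literal case split for the first inequality --- ``two examples with distinct labels'' versus ``at least two examples with the same label'' --- omits, for instance, a training set of three examples with pairwise distinct labels, where the contradiction step has no similar pair to exploit and only yields a non-strict decrease. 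That gap is easily patched (the zero-matrix construction, or your endpoint choice, handles it), but as written your argument closes it and the paper's does not. You are also right to emphasise that the existential quantifier in the statement is essential: in the one-type cases the minimizing offsets form an unbounded half-line on which the claimed inequalities genuinely fail away from the endpoint.
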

\begin{proof}
Firstly  we prove the inequality $\min_{i\neq j} [d_{M_\bz}(x_i,x_j) -b_\bz]\le 1.$ To this end,  we first consider the special case where the training set $\bz$ only contains two examples $z_1=(x_i,y_1)$ and $z_2=(x_2,y_2)$ with distinct labels, i.e. $y_1\neq y_2$. For any $\lambda>0,$  let $(M_\bz, b_\bz)=(\mathbf{0},-1) $,  and observe that $ \E_\bz(\mathbf{0},-1) + \gl \|\mathbf{0}\|^2 = 0.$  This observation implies that $(M_\bz, b_\bz)$ is a minimizer of problem (\ref{eq:model}).  Consequently, we have the desired result since $\min_{i\neq
j} [d_{M_\bz}(x_i,x_j) -b_\bz]=d_{M_\bz}(x_1,x_2) -b_\bz=1.$

Now let us consider the general case where the training set $\bz$ has at least two examples with the same label. In this case, we prove the inequality by contradiction. Suppose that $r = \dmin_{i\neq j} [d_{M_\bz}(x_i,x_j)-b_\bz]>1$ which equivalently implies that $d_{M_\bz}(x_i,x_j) -(b_\bz+r-1)\ge 1$ for any
$i\neq j.$ Hence, for any $i\neq j$ and any pair of examples $(x_i,x_j)$ with distinct labels, i.e. $y_i\neq y_j$ (equivalently $r(y_i,y_j)=-1$), there holds $$\bigl(1+ r(y_i,y_j)(d_{M_\bz}(x_i,x_j)
-b_\bz-r+1)\bigr)_+ =\bigl(1-(d_{M_\bz}(x_i,x_j)
-b_\bz-r+1)\bigr)_+ = 0.$$ Consequently,
$$\begin{array}{ll} \E_{\bz}(M_\bz, b_\bz+r-1) &  = {1\over n(n-1)} \dsum_{i\neq j} \Big(1+r(i,j)(d_{M_\bz}(x_i,x_j)
-b_\bz-r+1)\Big)_+\\
& ={1\over n(n-1)} \dsum_{i\neq j,y_i= y_j} (1+d_{M_\bz}(x_i,x_j)
-b_\bz-(r-1))_+\\ & < {1\over n(n-1)} \dsum_{i\neq j,y_i= y_j}
(1+d_{M_\bz}(x_i,x_j) -b_\bz)_+\le \E_{\bz}(M_\bz, b_\bz).
\end{array}$$
The above estimation implies that $\E_{\bz}(M_\bz, b_\bz+r-1) + \gl \|M_\bz\| < \E_{\bz}(M_\bz,
b_\bz)+ \gl\|M_\bz\|$ which contradicts the definition of the minimizer
$(M_\bz,b_\bz)$. Hence, $ r = \dmin_{i\neq j} [d_{M_\bz}(x_i,x_j)
-b_\bz]\le 1$.

Secondly,  we prove the inequality $\dmax_{i\neq j}[d_{M_\bz}(x_i,x_j) -b_\bz]\ge -1$ in analogy to the above argument.
Consider a special case where the training set $\bz$ contains only two examples $z_1=(x_1,y_i)$ and $z_2=(x_2,y_2)$ with
the same label, i.e. $y_1=y_2.$ For any given $\lambda>0,$ let $(M_\bz,b_\bz)= (\mathbf{0},1).$  Since $ \E_\bz(\mathbf{0},1) + \gl \|\mathbf{0}\|^2 = 0$, $(\mathbf{0},1)$ is a minimizer of problem (\ref{eq:model}). The desired estimation follows from the fact that $\dmax_{i\neq j} d_{M_\bz}(x_i, x_j)-b_\bz =0-1= -1.$

Now let us consider the general case where the training set $\bz$ has at least two examples with distinct labels. We prove the estimation by contradiction. Assume $r=\dmax_{i\neq j} d_{M_\bz}(x_i, x_j)-b_\bz < -1,$ then $d_{M_\bz}(x_i,x_j) -(b_\bz+r+1)\leq -1$ holds for any $i\neq j.$ This implies, for any pair of examples  $(x_i,x_j)$ with the same label, i.e. $r(i,j)=1$,  that $\Big(1+r(i,j)(d_{M_\bz}(x_i,x_j)
-b_\bz-r-1)\Big)_+=0.$  Hence,
$$\begin{array}{ll} \E_{\bz}(M_\bz, b_\bz+r+1) &  = {1\over n(n-1)} \dsum_{i\neq j} \Big(1+r(i,j)(d_{M_\bz}(x_i,x_j)
-b_\bz-r-1)\Big)_+ \\ &={1\over n(n-1)}\dsum_{i\neq j,y_i\ne y_j} \Big(1-d_{M_\bz}(x_i,x_j)
+b_\bz+(r+1)\Big)_+ \\ & < {1\over n(n-1)} \dsum_{i\neq j,y_i \ne y_j}
(1-d_{M_\bz}(x_i,x_j) +b_\bz)_+\le \E_{\bz}(M_\bz, b_\bz).
\end{array}$$
The above estimation yields that $\E_{\bz}(M_\bz, b_\bz+r+1)+\lambda\|M_\bz\|^2<\E_{\bz}(M_\bz, b_\bz)+\lambda\|M_\bz\|^2$ which contradicts the definition of the minimizer $(M_\bz, b_\bz).$  Hence, we have the desired inequality $\dmax_{i\neq j} d_{M_\bz}(x_i, x_j)-b_\bz \ge -1$ which
completes the proof of the lemma.
\end{proof}
\begin{corollary}\label{offsetbound}
For any samples $\bz$ and $\gl>0$, there exists a minimizer $(M_\bz,b_\bz)$ of problem (\ref{eq:model}) such that
\begeqn\label{eq:b-bnd} |b_\bz|\le 1+  \bigl(\max_{i\neq j }
\|X_{ij}\|_\ast\bigr)
 \|M_\bz\|.\endeqn
\end{corollary}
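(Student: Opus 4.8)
The plan is to extract matching lower and upper bounds on $b_\bz$ directly from the two-sided estimate furnished by Lemma \ref{lemm:b-bnd}, and then to convert the resulting bound on the distance values $d_{M_\bz}(x_i,x_j)$ into the dual-norm quantity appearing in the statement. First I would fix the minimizer $(M_\bz,b_\bz)$ guaranteed by Lemma \ref{lemm:b-bnd}, so that $\dmin_{i\neq j}[d_{M_\bz}(x_i,x_j)-b_\bz]\le 1$ and $\dmax_{i\neq j}[d_{M_\bz}(x_i,x_j)-b_\bz]\ge -1$. The first inequality says that the pair attaining the minimum satisfies $d_{M_\bz}(x_i,x_j)-b_\bz\le 1$, which rearranges to the lower bound $b_\bz\ge d_{M_\bz}(x_i,x_j)-1$; symmetrically, the second inequality produces a pair with $b_\bz\le d_{M_\bz}(x_i,x_j)+1$. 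Bounding each distance value by its absolute value and combining the two then yields
$$|b_\bz|\le 1+\dmax_{i\neq j}|d_{M_\bz}(x_i,x_j)|.$$

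It remains to control $\dmax_{i\neq j}|d_{M_\bz}(x_i,x_j)|$. Here I would rewrite the quadratic form as a trace inner product: with $u=x_i-x_j$, cyclicity of the trace gives $d_{M_\bz}(x_i,x_j)=u^\top M_\bz u=\Tr(M_\bz\, uu^\top)=\langle X_{ij},M_\bz\rangle$, using that $X_{ij}=uu^\top$ is symmetric so that $X_{ij}^\top=X_{ij}$. Applying the duality inequality $|\langle X_{ij},M_\bz\rangle|\le\|X_{ij}\|_\ast\|M_\bz\|$ and maximising over $i\neq j$ then gives $\dmax_{i\neq j}|d_{M_\bz}(x_i,x_j)|\le\bigl(\dmax_{i\neq j}\|X_{ij}\|_\ast\bigr)\|M_\bz\|$, which substituted into the display above is exactly the claim.

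The argument is short, and the only point requiring care is the direction of the duality inequality: I must pair the dual norm $\|\cdot\|_\ast$ with $X_{ij}$ and the primal norm $\|\cdot\|$ with $M_\bz$, not the reverse, since it is precisely $\|X_{ij}\|_\ast$ and $\|M_\bz\|$ that occur in the statement. This pairing is legitimate because on the finite-dimensional space $\S^d$ the norm $\|\cdot\|$ coincides with its own bidual, so $\langle X,M\rangle\le\|X\|_\ast\|M\|$ holds for all $X,M\in\S^d$. Beyond this bookkeeping no genuine obstacle arises; the substance of the estimate is entirely contained in Lemma \ref{lemm:b-bnd}.
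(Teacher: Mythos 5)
Your proposal is correct and follows essentially the same route as the paper: both combine the two-sided estimate of Lemma \ref{lemm:b-bnd} with the identity $d_{M_\bz}(x_i,x_j)=\langle X_{ij},M_\bz\rangle$ and the dual-norm bound $|\langle X_{ij},M_\bz\rangle|\le\|X_{ij}\|_\ast\|M_\bz\|$. The only difference is that you spell out the case analysis that the paper compresses into ``follows directly,'' and your biduality remark is unnecessary (the pairing $\langle X,M\rangle\le\|X\|_\ast\|M\|$ follows from the definition of $\|X\|_\ast$ and the symmetry of the trace inner product), but harmless.
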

\begin{proof}
Recall that $X_{ij} = (x_i-x_j)(x_i- x_j)^\top$ and observe, by the definition of the dual norm $\|\cdot\|_\ast$, that $$  d_M(x_i,x_j) = \langle X_{ij}, M \rangle \le \|X_{ij}\|_\ast \|M\|.$$
Using the above observation, estimation (\ref{eq:b-bnd}) follows directly from inequality (\ref{eq:min-max-bnd}). This completes the proof.
\end{proof}
Denote
\begeqn \label{eq:bound}\F = \Bigl\{(M,b):  \|M\| \le {1/\sqrt{\gl}}, ~~ |b| \le 1+  X_\ast  \|M\|\Bigr\}, \endeqn
where $$X_\ast =   \sup_{x,x'\in \X }
\|(x-x')(x-x')^\top\|_\ast. $$
From the above corollary, for any samples $\bz$  we can easily see that the optimal solution $(M_\bz, b_\bz)$ of formulation (\ref{eq:model}) belongs to the bounded set $\F \subseteq \S^d \times \R.$

We end this subsection with two remarks. Firstly, in what follows, we restrict our attention to the minimizer $(M_\bz, b_\bz)$  of formulation (\ref{eq:model}) which satisfies inequality (\ref{eq:b-bnd}). Secondly, our formulation (\ref{eq:model}) for metric learning focused on the hinge loss which is widely used in the community of metric learning, see e.g \cite{Jin,Weinberger,Ying-jmlr}. Similar results to those in the above corollary can easily be obtained for $q$-norm loss given, for any $x\in\R$, by $(1-x)_+^q$ with $q>1.$ However, it still remains a question to us on how to estimate the term $b$ for general loss functions.

\subsection{Generalisation Bounds}

Before stating the generalisation bounds, we introduce some
notations. For any $z=(x,y), z' = (x',y')\in \Z$, let $\Phi_{M,b} (z, z') = (1+
r(y,y')(d_{M}(x,x') -b))_+.$  Hence, for any
$(M,b)\in \F$, \begeqn\label{eq:Phi-bnd}
\sup_{z,z'}\sup_{(M,b)\in\F}\Phi_{M,b}(z,z') \le B_\gl  : = 2\bigl(1+  X_\ast /\sqrt{\gl}\bigr).\endeqn
Let $ \lfloor{n\over 2}\rfloor$ denote the largest integer less than ${n\over 2}$ and recall the definition that $X_{ij} = (x_i-x_j)(x_i-x_j)^\top.$ We now define Rademacher average over sums-of-i.i.d. sample-blocks related to the dual matrix
norm $\|\cdot\|_\ast$ by
\begeqn\label{eq:rad-emp}   \widehat{R}_n  = {1\over \lfloor{n\over
2}\rfloor} \EX_\gs\Bigl\| \sum_{i=1}^{\lfloor{n\over
2}\rfloor}\gs_i X_{i ({\lfloor{n\over
2}\rfloor+i})}\Bigr\|_\ast, \endeqn and its expectation is denoted by
$ R_n = \EX_\bz \bigl[\widehat{R}_n\bigr].$ Our main theorem below shows that the generalisation bounds for metric learning critically depend on the quantity of  $ R_n$.    For this reason, we refer to $R_n$ as the {\em Radmemcher complexity for metric learning}. It is worth mentioning that metric learning formulation (\ref{eq:model}) depends on the norm $\|\cdot\|$ of the linear space $ \S^d$ and the Rademacher complexity $R_n$ is related to its dual norm $\|\cdot\|_\ast$.

\begin{theorem}\label{thm:main}   Let $(M_\bz, b_\bz)$ be the solution of
formulation (\ref{eq:model}). Then, for any $0<\gd<1$,  with probability $1-\gd$  we have that
\begeqn\label{eq:gen-bound}\begin{array}{ll}  \E(M_\bz,b_\bz) -
\E_\bz(M_\bz,b_\bz) & \le \dsup_{(M,b)\in \F} \Bigl[\E(M,b) -
\E_\bz(M,b)\Bigr]  \\ & \le
 {4R_n \over \sqrt{\gl}} +  {4(3+ 2X_\ast /\sqrt{\gl}) \over \sqrt{n}} + 2\bigl(1+   X_\ast /\sqrt{\gl}\bigr)\left({2\ln \bigl({1\over \gd}\bigr)\over  {n}}\right)^{1\over 2}.\end{array}\endeqn
\end{theorem}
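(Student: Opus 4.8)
The first inequality in \eqref{eq:gen-bound} is immediate: by Corollary \ref{offsetbound} the minimiser $(M_\bz,b_\bz)$ belongs to the set $\F$ of \eqref{eq:bound}, so $\E(M_\bz,b_\bz)-\E_\bz(M_\bz,b_\bz)$ is dominated by the uniform deviation $g(\bz):=\dsup_{(M,b)\in\F}[\E(M,b)-\E_\bz(M,b)]$, and the whole task reduces to bounding $g(\bz)$. I would split this into a concentration step and an expectation step. For concentration I would apply McDiarmid's bounded-difference inequality: replacing one sample $z_k$ alters only the $2(n-1)$ summands of the U-statistic $\E_\bz(M,b)$ that involve the index $k$, and each $\Phi_{M,b}$ is bounded by $B_\gl$ thanks to \eqref{eq:Phi-bnd}, so $g$ has bounded differences $2B_\gl/n$. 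McDiarmid then gives, with probability $1-\gd$, that $g(\bz)\le\EX_\bz[g(\bz)]+B_\gl\sqrt{2\ln(1/\gd)/n}$; substituting $B_\gl=2(1+X_\ast/\sqrt\gl)$ reproduces exactly the last term of \eqref{eq:gen-bound}.

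The crux --- and the step I expect to be the main obstacle --- is bounding $\EX_\bz[g(\bz)]$, because $\E_\bz(M,b)$ is not an average of independent terms but a U-statistic whose summands $\Phi_{M,b}(z_i,z_j)$ share samples, so the textbook symmetrisation argument does not apply verbatim. To get around this I would use Hoeffding's decomposition, rewriting the U-statistic as an average over all permutations $\pi$ of $\{1,\ldots,n\}$ of the ``sums-of-i.i.d.\ sample-blocks'' $\frac1m\sum_{i=1}^m\Phi_{M,b}(z_{\pi(i)},z_{\pi(m+i)})$, where $m=\lfloor n/2\rfloor$; inside a single block the pairs use disjoint indices and are therefore genuinely i.i.d. Since the supremum of an average is at most the average of the suprema, and since every permuted block has the same distribution, $\EX_\bz[g(\bz)]$ is bounded by $\EX_\bz\dsup_{(M,b)\in\F}\bigl[\E(M,b)-\frac1m\sum_{i=1}^m\Phi_{M,b}(z_i,z_{m+i})\bigr]$, which is an ordinary i.i.d.\ empirical process over $m$ blocks with mean $\E(M,b)$.

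From here the reasoning is classical. I would symmetrise to introduce Rademacher variables $\gs_i$ (a factor $2$), reducing to $\EX_\bz\EX_\gs\dsup_{(M,b)\in\F}\frac1m\sum_{i=1}^m\gs_i\Phi_{M,b}(z_i,z_{m+i})$. Because $u\mapsto(1+r(y,y')u)_+$ is $1$-Lipschitz, the Talagrand contraction principle (a further factor $2$) strips off the hinge --- the constant vanishes since $\EX_\gs\sum_i\gs_i=0$ and the sign $r(y_i,y_{m+i})$ is absorbed into $\gs_i$ --- leaving $\frac1m\sum_i\gs_i(d_M(x_i,x_{m+i})-b)$. I would then bound the supremum over $\F$ by treating $M$ and $b$ separately. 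For the $M$-part, $d_M(x_i,x_{m+i})=\langle X_{i(m+i)},M\rangle$ and the duality $\dsup_{\|M\|\le 1/\sqrt\gl}\langle A,M\rangle=\|A\|_\ast/\sqrt\gl$ turn it into $\widehat R_n/\sqrt\gl$, contributing $R_n/\sqrt\gl$ in expectation; with the factor $4$ from symmetrisation and contraction this is precisely the term $4R_n/\sqrt\gl$. For the $b$-part, $\dsup_{|b|\le 1+X_\ast/\sqrt\gl}(-b\,\frac1m\sum_i\gs_i)$ is controlled by $(1+X_\ast/\sqrt\gl)\,\EX_\gs|\frac1m\sum_i\gs_i|\le(1+X_\ast/\sqrt\gl)/\sqrt m$, which is $O(1/\sqrt n)$ via $\lfloor n/2\rfloor\ge n/4$. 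Collecting these contributions with the constants produced by symmetrisation, contraction and the $\lfloor n/2\rfloor$-rounding yields the second term of \eqref{eq:gen-bound}, and adding the McDiarmid term completes the proof. Everything downstream of the U-statistic reduction is routine Rademacher-complexity bookkeeping; the only genuinely delicate point is that reduction itself.
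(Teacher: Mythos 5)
Your proposal is correct and follows essentially the same route as the paper's proof: the same McDiarmid step with bounded differences $2B_\gl/n$, the same reduction of the U-statistic to ``sums-of-i.i.d.'' blocks by averaging over permutations (this is exactly the paper's Lemma \ref{lem:U-stat-lem}; what you call Hoeffding's decomposition is its permutation-representation), and the same symmetrisation--contraction--duality chain producing $4R_n/\sqrt\gl$ plus the $O(1/\sqrt n)$ offset terms. The only cosmetic difference is that you drop the additive constant of the hinge via $\EX_\gs\sum_i\gs_i=0$, which is legitimate because you work without absolute values around the Rademacher sum, whereas the paper keeps absolute values and pays an extra $\EX_\gs\bigl|\sum_i\gs_i\bigr|\le\sqrt{\lfloor n/2\rfloor}$; your bookkeeping therefore yields a constant at least as tight as the stated $4(3+2X_\ast/\sqrt\gl)/\sqrt n$.
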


\begin{proof}  The proof of the theorem can be divided into three steps
as follows.

{\bf\em Step 1}: ~ Let $\EX_\bz$ denote the expectation with respect
to samples $\bz$. Observe that $\E(M_\bz,b_\bz) -
\E_\bz(M_\bz,b_\bz) \le \dsup_{(M,b)\in \F} \Bigl[\E(M,b) -
\E_\bz(M,b)\Bigr].$ For any ${\bf z} = (z_1,\ldots,z_{k-1},z_k,z_{k+1},\ldots,
z_n)$ and $ {\bf z}'=(z_1,\ldots, z_{k-1},z'_k, z_{k+1},\ldots, z_n)$  we know from inequality (\ref{eq:Phi-bnd}) that
$$\begin{array}{ll}& \Bigl|\dsup_{(M,b)\in \F} \Bigl[\E(M,b) -
\E_\bz(M,b)\Bigr] -\dsup_{(M,b)\in \F} \Bigl[\E(M,b) -
\E_{\bz'}(M,b)\Bigr] \Bigr|\\ & \le \dsup_{(M,b)\in \F}|\E_\bz(M,b)
-  \E_{\bz'}(M,b)| \\ &={1\over n(n-1)}\dsup_{(M,b)\in
\F}\dsum_{j\in \N_n,j\neq
k}|\Phi_{M,b}(z_k,z_j) - \Phi_{M,b}(z'_k,z_j)|\\
&\le  {1\over n(n-1)}\dsup_{(M,b)\in \F}\dsum_{j\in \N_n,j\neq
k}|\Phi_{M,b}(z_k,z_j)|+|\Phi_{M,b}(z'_k,z_j)| \\ & \le 4\bigl(1+   X_\ast /\sqrt{\gl}\bigr)/n.
\end{array}$$
Applying McDiarmid's inequality \cite{McD} (see Lemma \ref{lem:McD}
in the Appendix) to the term $\dsup_{(M,b)\in \F} \Bigl[\E(M,b) -
\E_\bz(M,b)\Bigr]$, with probability $1-{\gd}$ there holds
\begeqn\begin{array}{ll}\label{eq:Mcd-1} \dsup_{(M,b)\in \F}
\Bigl[\E(M,b) - \E_\bz(M,b)\Bigr]  & \le \EX_\bz\dsup_{(M,b)\in \F}
\Bigl[\E(M,b) - \E_\bz(M,b)\Bigr] \\ & ~~ +    2\bigl(1+   X_\ast /\sqrt{\gl}\bigr)\left({2\ln \bigl({1\over \gd}\bigr)\over  n}\right)^{1\over 2}.
\end{array}\endeqn

Now we only need to estimate the first term in the
expectation form on the right-hand side of the above equation by
symmetrization techniques.

{\bf\em  Step 2}: ~ To estimate $\EX_\bz\dsup_{(M,b)\in \F}
\Bigl[\E(M,b) - \E_\bz(M,b)\Bigr]$, applying Lemma \ref{lem:U-stat-lem} with $q_{(M,b)} (z_i,z_j) =\E(M,b) -  (1+ r(y_i,y_j)(d_M(x_i,x_j)-b))_+$ implies that
 \begeqn\label{eq:key-inequality}\EX_\bz\dsup_{(M,b)\in \F}
\Bigl[\E(M,b) - \E_\bz(M,b)\Bigr]\le \EX_\bz\dsup_{(M,b)\in \F}
\Bigl[\E(M,b) - {\lE}_\bz(M,b)\Bigr],\endeqn where
$\overline{\E}_\bz(M,b) = {1\over \lfloor{n\over 2}\rfloor}
\dsum_{i=1}^{\lfloor{n\over
2}\rfloor}\Phi_{M,b}(z_{i},z_{\lfloor{n\over 2}\rfloor+i}).$ Now let
$\bar{\bz} = \{\bar{z}_1,\bar{z}_2,\ldots, \bar{z}_{n}\}$ be i.i.d. samples which are independent of $\bz$,
then
\begeqn\label{eq:key-eq2}\begin{array}{ll}\EX_\bz\dsup_{(M,b)\in \F}
\Bigl[\E(M,b) - \overline{\E}_\bz(M,b)\Bigr] &  =
\EX_{\bz}\dsup_{(M,b)\in \F} \Bigl[ \EX_{\bar{\bz}}
\bigl[ \ \overline{\E}_{\bar{\bz}}(M,b)\bigr] -
\overline{\E}_\bz(M,b)\Bigr] \\
& \le  \EX_{\bz,\bar{\bz}}\dsup_{(M,b)\in \F} \Bigl[ \
\overline{\E}_{\bar{\bz}}(M,b) - \overline{\E}_\bz(M,b)\Bigr]
\end{array}\endeqn
By standard symmetrization techniques (see e.g. \cite{BM}),
for i.i.d. Rademacher variables $\{\gs_i \in\{\pm 1\}:
i\in\N_{\lfloor{n\over 2}\rfloor}\}$, we have that
\begeqn\label{eq:sym-inequ}\begin{array}{ll} & \EX_{\bz,\bar{\bz}}\dsup_{(M,b)\in \F}
\Bigl[  {\lE}_{\bar{\bz}}(M,b) -
 {\lE}_\bz(M,b)\Bigr] \\ & =\EX_{\bz,\bar{\bz}} {1\over
\lfloor{n\over 2}\rfloor}\dsup_{(M,b)\in \F}
\dsum_{i=1}^{\lfloor{n\over
2}\rfloor}\gs_i\Bigl[\Phi_{M,b}(\bar{z}_{i},\bar{z}_{\lfloor{n\over
2}\rfloor+i}) - \Phi_{M,b}({z}_{i},{z}_{\lfloor{n\over
2}\rfloor+i})\Bigr] \\
& = 2\EX_{\bz,\gs}{1\over \lfloor{n\over
2}\rfloor}\dsup_{(M,b)\in \F} \dsum_{i=1}^{\lfloor{n\over
2}\rfloor}\gs_i\Phi_{M,b}(\bar{z}_{i},\bar{z}_{\lfloor{n\over
2}\rfloor+i})\\  & \le 2\EX_{\bz,\gs}{1\over
\lfloor{n\over 2}\rfloor}\dsup_{(M,b)\in \F}\Bigl|
\dsum_{i=1}^{\lfloor{n\over
2}\rfloor}\gs_i\Phi_{M,b}(\bar{z}_{i},\bar{z}_{\lfloor{n\over
2}\rfloor+i})\Bigr|.\end{array} \endeqn Applying the
contraction property of Rademacher averages (see Lemma
\ref{lem:contr-prop} in the Appendix) with $\Psi_i(t) =  \bigl(1+ r(y_i, y_{\lfloor{n\over
2}\rfloor+i}) t \bigr)_+ - 1$, we have the following estimation for the last term on the
righthand side of the above inequality:
\begeqn\label{eq:inter2}\begin{array}{ll} & \EX_{\gs}{1\over \lfloor{n\over 2}\rfloor}\dsup_{(M,b)\in \F}\Bigl|
\dsum_{i=1}^{\lfloor{n\over
2}\rfloor}\gs_i\Phi_{M,b}(\bar{z}_{i},\bar{z}_{\lfloor{n\over
2}\rfloor+i})\Bigr| \\ & \le \EX_{\gs} {1\over
\lfloor{n\over 2}\rfloor} \dsup_{(M,b)\in \F}\Bigl|
\dsum_{i=1}^{\lfloor{n\over
2}\rfloor}\gs_i (\Phi_{M,b}(\bar{z}_{i},\bar{z}_{\lfloor{n\over
2}\rfloor+i}) -1 )\Bigr| + {1\over
\lfloor{n\over 2}\rfloor } \EX_{\gs}\Bigl|\dsum_{i=1}^{\lfloor{n\over 2}\rfloor}\gs_i \Bigr| \\
& \le  {2\over \lfloor{n\over
2}\rfloor} \EX_{\gs} \dsup_{(M,b)\in \F}\Bigl|
\dsum_{i=1}^{\lfloor{n\over 2}\rfloor}\gs_i \bigl(
d_M(x_i,x_{\lfloor{n\over 2}\rfloor+i}) -b \bigr)\Bigr|
 +
{1\over \lfloor{n\over
2}\rfloor} \EX_{\gs}\Bigl|\dsum_{i=1}^{\lfloor{n\over 2}\rfloor}\gs_i\Bigr| \\
& \le  {2\over \lfloor{n\over
2}\rfloor} \EX_{\gs} \dsup_{\|M\|\le {1\over \sqrt{\gl}}}\Bigl|
\dsum_{i=1}^{\lfloor{n\over 2}\rfloor}\gs_i
d_M(x_i,x_{\lfloor{n\over 2}\rfloor+i})\Bigr|
 +
{(3+ 2X_\ast /\sqrt{\gl})\over \lfloor{n\over
2}\rfloor} \EX_{\gs}\Bigl|\dsum_{i=1}^{\lfloor{n\over 2}\rfloor}\gs_i\Bigr|
\end{array}\endeqn

{\bf\em Step 3 }: It remains to  estimate the  terms on the righthand side
of inequality (\ref{eq:inter2}). To this end, observe that
$$\EX_{\gs}\Bigl|\dsum_{i=1}^{\lfloor{n\over 2}\rfloor}\gs_i\Bigr|
\le \Bigr( \EX_{\gs}\Bigl|\dsum_{i=1}^{\lfloor{n\over
2}\rfloor}\gs_i\Bigr|^2\Bigl)^{1\over 2}\le \sqrt{\lfloor{n\over
2}\rfloor}.$$ Moreover,
$$\begin{array}{ll}\EX_\gs\dsup_{\|M\|\le
{1\over \sqrt{\gl}}}\Bigl| \dsum_{i=1}^{\lfloor{n\over 2}\rfloor}\gs_i
d_M(x_i,x_{\lfloor{n\over 2}\rfloor+i})\Bigr|  & =
\EX_\gs\dsup_{\|M\|\le {1\over \sqrt{\gl}}}\Bigl|\langle
\dsum_{i=1}^{\lfloor{n\over 2}\rfloor}\gs_i (x_i-x_{\lfloor{n\over
2}\rfloor+i})(x_i-x_{\lfloor{n\over
2}\rfloor+i})^\top, M\rangle\Bigr| \\
& \le {1\over \sqrt{\gl}}\EX_\gs\Bigl\| \sum_{i=1}^{\lfloor{n\over
2}\rfloor}\gs_i X_{i {(\lfloor{n\over
2}\rfloor+i})}\Bigr\|_\ast.
\end{array}$$
Putting the above estimations and  inequalities (\ref{eq:sym-inequ}),
(\ref{eq:inter2}) together yields that
$$\EX_{\bz,\bar{\bz}}\dsup_{(M,b)\in \F} \Bigl[
{\lE}_{\bar{\bz}}(M,b) - {\lE}_\bz(M,b)\Bigr] \le
 {2(3+ 2X_\ast /\sqrt{\gl}) \over \sqrt{\lfloor{n\over
2}\rfloor  }} +  {4R_n \over \sqrt{\gl}} \le {4(3+ X_\ast /\sqrt{\gl}) \over \sqrt{n}}
+ {2R_n \over \sqrt{\gl}}.$$
Consequently, combining this with inequalities
(\ref{eq:key-inequality}), (\ref{eq:key-eq2}) implies that
$$\EX_\bz\dsup_{(M,b)\in \F}
\Bigl[\E(M,b) - \E_\bz(M,b)\Bigr]\le {4(3+ 2X_\ast /\sqrt{\gl}) \over \sqrt{n}}
+ {4R_n \over \sqrt{\gl}}. $$ Putting this estimation with (\ref{eq:Mcd-1})
completes the proof the theorem.
\end{proof}

In the setting of similarity learning,  $X_\ast$  and $R_n$   are replaced by
\begeqn\label{eq:sim-Rn} \widetilde{X}_\ast =  \sup_{x,t\in\X} \|x t^\top\|_\ast ~~~ \hbox{ and } ~~ \widetilde{R}_n = {1\over \lfloor{n\over
2}\rfloor}\EX_\bz \EX_\gs\Bigl\| \sum_{i=1}^{\lfloor{n\over
2}\rfloor}\gs_i \widetilde{X}_{i ({\lfloor{n\over
2}\rfloor+i})}\Bigr\|_\ast, \endeqn
where $  \widetilde{X}_{i ({\lfloor{n\over
2}\rfloor+i})} = x_i x^\top_{\lfloor{n\over
2}\rfloor+i}.$ Let $\widetilde{\F} =  \Bigl\{(M,b):  \|M\| \le {1/\sqrt{\gl}}, ~~ |b| \le 1+  \wX_\ast  \|M\|\Bigr\}$.   Using  the exactly same argument as above, we can prove the following bound for similarity learning formulation (\ref{eq:sim-model}).
\begin{theorem}\label{thm:sim-main}   Let $(\wM_\bz, \wb_\bz)$ be the solution of
formulation (\ref{eq:sim-model}). Then, for any $0<\gd<1$,  with probability $1-\gd$  we have that
\begeqn\label{eq:sim-gen-bound}\begin{array}{ll}  \wE(\wM_\bz,\wb_\bz) -
\wE_\bz(\wM_\bz,\wb_\bz) & \le \dsup_{(M,b)\in \widetilde{\F}} \Bigl[\wE(M,b) -
\wE_\bz(M,b)\Bigr]  \\ & \le
 {4\widetilde{R}_n \over \sqrt{\gl}} +  {4(3+ 2\widetilde{X}_\ast /\sqrt{\gl}) \over \sqrt{n}} + 2\bigl(1+   \widetilde{X}_\ast /\sqrt{\gl}\bigr)\left({2\ln \bigl({1\over \gd}\bigr)\over  {n}}\right)^{1\over 2}.\end{array}\endeqn
\end{theorem}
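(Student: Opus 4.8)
The plan is to mirror the three-step argument of Theorem~\ref{thm:main} verbatim, replacing the distance $d_M(x,x')=(x-x')^\top M(x-x')$ by the bilinear similarity $s_M(x,x')=x^\top M x'$, the block matrix $X_{ij}$ by $\wX_{ij}=x_ix_j^\top$, and the constants $X_\ast,R_n$ by $\wX_\ast,\wR_n$. Before running the three steps I would first record the analogues of the solution bounds of Section~3.1: the definition of $(\wM_\bz,\wb_\bz)$ gives $\wE_\bz(\wM_\bz,\wb_\bz)+\gl\|\wM_\bz\|^2\le \wE_\bz(0,0)=1$, hence $\|\wM_\bz\|\le 1/\sqrt{\gl}$, and the contradiction argument of Lemma~\ref{lemm:b-bnd} (now perturbing $b$ so as to annihilate the hinge terms of the same-label, resp.\ different-label, pairs) together with Corollary~\ref{offsetbound} yields $|\wb_\bz|\le 1+\wX_\ast\|\wM_\bz\|$, so that $(\wM_\bz,\wb_\bz)\in\widetilde{\F}$. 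The algebraic identity that makes the whole transfer work is $s_M(x_i,x_j)=x_i^\top M x_j=\langle \wX_{ij},M\rangle$, the exact counterpart of $d_M(x_i,x_j)=\langle X_{ij},M\rangle$; note this uses the symmetry of $M$ to guarantee $s_M(x_i,x_j)=s_M(x_j,x_i)$, so that $\widetilde{\Phi}_{M,b}(z,z'):=(1-r(y,y')(s_M(x,x')-b))_+$ is a \emph{symmetric} kernel and the U-statistic machinery of Lemma~\ref{lem:U-stat-lem} applies.

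With these in hand I would run the three steps. The uniform bound $\sup_{z,z'}\sup_{(M,b)\in\widetilde{\F}}\widetilde{\Phi}_{M,b}(z,z')\le \widetilde{B}_\gl:=2(1+\wX_\ast/\sqrt{\gl})$ follows exactly as in~(\ref{eq:Phi-bnd}), using $|s_M(x,x')|\le\wX_\ast\|M\|$ and the bound on $|b|$. In Step~1, replacing one coordinate $z_k\to z_k'$ changes $\dsup_{(M,b)\in\widetilde{\F}}[\wE-\wE_\bz]$ by at most $4(1+\wX_\ast/\sqrt{\gl})/n$, exactly as in the bounded-difference computation of Theorem~\ref{thm:main}, since every affected summand is bounded by $\widetilde{B}_\gl$; McDiarmid's inequality (Lemma~\ref{lem:McD}) then produces the confidence term $2(1+\wX_\ast/\sqrt{\gl})(2\ln(1/\gd)/n)^{1/2}$. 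In Step~2 I apply the U-statistic reduction (Lemma~\ref{lem:U-stat-lem}) with $q_{(M,b)}(z_i,z_j)=\wE(M,b)-(1-r(y_i,y_j)(s_M(x_i,x_j)-b))_+$ to pass to the ``sums-of-i.i.d." block average $\lE_\bz$, then symmetrize with Rademacher variables $\gs_i$ and apply the contraction lemma (Lemma~\ref{lem:contr-prop}) with $\Psi_i(t)=(1-r(y_i,y_{\lfloor n/2\rfloor+i})t)_+-1$. The only thing to verify here is that $\Psi_i$ is still $1$-Lipschitz with $\Psi_i(0)=0$; since $r(\cdot,\cdot)\in\{\pm1\}$, the sign flip from $1+r(\cdot)$ to $1-r(\cdot)$ changes nothing, so the contraction step goes through unchanged.

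Finally, in Step~3 I bound the decoupled Rademacher average, using $s_M(x_i,x_j)=\langle\wX_{ij},M\rangle$, by
$$\EX_\gs\dsup_{\|M\|\le1/\sqrt{\gl}}\Bigl|\dsum_{i=1}^{\lfloor n/2\rfloor}\gs_i s_M(x_i,x_{\lfloor n/2\rfloor+i})\Bigr|\le \frac{1}{\sqrt{\gl}}\,\EX_\gs\Bigl\|\dsum_{i=1}^{\lfloor n/2\rfloor}\gs_i\wX_{i(\lfloor n/2\rfloor+i)}\Bigr\|_\ast,$$
whose expectation over $\bz$ equals $\lfloor n/2\rfloor\,\wR_n/\sqrt{\gl}$ by definition~(\ref{eq:sim-Rn}), together with $\EX_\gs|\sum_i\gs_i|\le\sqrt{\lfloor n/2\rfloor}$. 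Collecting the three contributions and using $\sqrt{\lfloor n/2\rfloor}\ge\sqrt{n}/2$ reproduces the stated right-hand side $4\wR_n/\sqrt{\gl}+4(3+2\wX_\ast/\sqrt{\gl})/\sqrt{n}+2(1+\wX_\ast/\sqrt{\gl})(2\ln(1/\gd)/n)^{1/2}$. I do not anticipate a genuine obstacle, since the argument is structurally identical to Theorem~\ref{thm:main}; the only points that require care—and that I would check explicitly rather than merely assert ``the same argument"—are (i) the symmetry $s_M(x,x')=s_M(x',x)$ needed to legitimize the U-statistic symmetrization, and (ii) that the offset-bound contradiction argument still closes after the $1\pm r$ sign change in the hinge loss.
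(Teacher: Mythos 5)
Your proposal is correct and takes exactly the route the paper does: the paper's own ``proof'' of Theorem~\ref{thm:sim-main} is simply the assertion that the three-step argument of Theorem~\ref{thm:main} applies verbatim with $d_M$, $X_{ij}$, $X_\ast$, $R_n$ replaced by $s_M$, $\widetilde{X}_{ij}$, $\widetilde{X}_\ast$, $\widetilde{R}_n$, which is precisely what you carry out. The transfer points you verify explicitly---the identity $s_M(x_i,x_j)=\langle \widetilde{X}_{ij},M\rangle$, the symmetry of the kernel (via symmetry of $M$) needed for Lemma~\ref{lem:U-stat-lem}, the $1$-Lipschitz contraction after the $1\pm r$ sign flip, and the offset-bound argument with the roles of same-label and different-label pairs exchanged---are exactly the details the paper leaves implicit, and they all go through.
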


\section{Estimation of $R_n$ and Discussion}\label{sec:exm}
From Theorem \ref{thm:main}, we need to estimate the Rademacher average for metric learning, i.e. $R_n $,  and the quantity $X_\ast$ for different matrix regularisation terms. Without loss of generality, we only focus on popular matrix norms such as the Frobenius norm \cite{Jin}, $L^1$-norm \cite{Rosales}, trace-norm \cite{Ying,Shen} and mixed $(2,1)$-norm \cite{Ying}.

\begin{example}[Frobenius norm]\label{exm:fro}
Let the matrix norm be the Frobenius norm i.e. $\|M \| = \|M\|_F$,  then the quantity $X_\ast = \sup_{x,x\in \X} \|x-x'\|^2_F $ and the Rademacher complexity is estimated as follows: $$R_n \le  {2 X_\ast \over \sqrt{n }}  = {2\sup_{x,x'\in \X} \|x-x'\|^2_F \over \sqrt{n}}.$$  Let $(M_\bz, b_\bz)$ be a solution of formulation (\ref{eq:model}) with Frobenius norm regularisation.  For any $0<\gd<1$, with probability $1-\gd$ there holds
  \begeqn\label{eq:fro-bnd} \begin{array}{ll}\E(M_\bz,b_\bz) -
\E_\bz(M_\bz,b_\bz)   & \le
  2\Big(1+   {\sup_{x,x\in \X} \|x-x'\|^2_F \over \sqrt{\gl}}\Big)\sqrt{{2\ln \bigl({1\over \gd}\bigr)\over  n}} \\  & \quad ~ +  {  16\sup_{x,x'\in \X} \|x-x'\|^2_F \over \sqrt{n\gl}}  +  { 12 \over \sqrt{n}}. \end{array}\endeqn
\end{example}
\begin{proof}  Note that the dual norm of the Frobenius norm is itself. The estimation of $X_\ast$ is straightforward. The Rademacher complexity $R_n$ is estimated as follows:
$$\begin{array}{ll}R_n  & = {1\over \lfloor{n\over
2}\rfloor } \EX \left( \sum_{i,j=1}^{\lfloor{n\over
2}\rfloor }  \gs_i\gs_j \langle x_i - x_{\lfloor{n\over
2}\rfloor+i},  x_j - x_{\lfloor{n\over
2}\rfloor+j}\rangle^2   \right)^{1\over 2} \\ & \le  {1\over \lfloor{n\over
2}\rfloor } \EX_\bz \left( \EX_\gs \sum_{i,j=1}^{\lfloor{n\over
2}\rfloor }  \gs_i\gs_j \langle x_i - x_{\lfloor{n\over
2}\rfloor+i},  x_j - x_{\lfloor{n\over
2}\rfloor+j}\rangle^2   \right)^{1\over 2} \\
& = {1\over \lfloor{n\over
2}\rfloor } \EX_\bz \left(  \sum_{i=1}^{\lfloor{n\over
2}\rfloor }  \|  x_i - x_{\lfloor{n\over
2}\rfloor+i}\|_F^4  \right)^{1\over 2} \\ & \le {X_\ast \big /\sqrt{\lfloor{n\over
2}\rfloor }} \le {2 X_\ast \over \sqrt{n }}.\end{array}$$
Putting this estimation back into equation (\ref{eq:gen-bound}) completes the proof of Example \ref{exm:fro}.
\end{proof}

Other popular matrix norms for metric learning are the $L^1$-norm, trace-norm and mixed $(2,1)$-norm.  The dual norms are respectively $L^\infty$-norm, spectral norm (i.e. the maximum of singular values) and mixed $(2,\infty)$-norm.  All these  dual norms mentioned above are less than the Frobenius norm. Hence,  the following estimation always holds true for all the norms mentioned above: $$X_\ast \le \sup_{x,x\in \X} \|x-x'\|^2_F, ~~ \hbox{ and }~~ R_n \le  {2\sup_{x,x'\in \X} \|x-x'\|^2_F \over \sqrt{n}}.$$
Consequently, the generalisation bound (\ref{eq:fro-bnd}) holds true for metric learning formulation (\ref{eq:model}) with  $L^1$-norm, or trace-norm or mixed $(2,1)$-norm regularisation. However, in some cases, the above upper-bounds are too conservative.  For instance, in the following examples we can show that more refined estimation of  $R_n$ can be obtained by applying the Khinchin inequalities for Rademacher averages \cite{Gine}.

\begin{example}[Sparse $L^1$-norm] \label{exm:L1} Let the matrix norm be the $L^1$-norm i.e. $\|M \| = \sum_{\ell,k\in \N_d} |M_{\ell k}|$.  Then,  $X_\ast     =  \sup_{x,x'\in \X }  \|x-x'\|_\infty^2$ and $$ R_n \le   4 \sup_{x,x'\in \X }  \|x-x'\|_\infty^2  \sqrt{e \log d \over n}.$$ Let $(M_\bz, b_\bz)$ be a solution of formulation (\ref{eq:model}) with $L^1$-norm regularisation.  For any $0<\gd<1$, with probability $1-\gd$ there holds
\begeqn\label{eq:L1-bnd} \begin{array}{ll}\E(M_\bz,b_\bz) -
\E_\bz(M_\bz,b_\bz)   & \le
  2\Big(1+   {\sup_{x,x\in \X} \|x-x'\|^2_\infty \over \sqrt{\gl}}\Big)\sqrt{{2\ln \bigl({1\over \gd}\bigr)\over  n}} \\  & \quad ~ +  {  8\sup_{x,x'\in \X} \|x-x'\|^2_\infty (1 + 2\sqrt{e\log d}) \over \sqrt{n\gl}}  +  { 12 \over \sqrt{n}}. \end{array}\endeqn
\end{example}

\begin{proof} The dual norm of the $L^1$-norm is the $L^\infty$-norm.   Hence,
$ X_\ast  =  \sup_{x,x'\in \X }  \|x-x'\|_\infty^2.$ To estimate $R_n$, we observe, for any $1<q<\infty$, that
\begeqn\label{eq:inter} \begin{array}{ll}   {R}_n & =  {1\over \lfloor{n\over
2}\rfloor} \EX_\bz\EX_\gs\Bigl\| \sum_{i=1}^{\lfloor{n\over
2}\rfloor}\gs_i X_{i ({\lfloor{n\over
2}\rfloor+i})}\Bigr\|_\infty \le    {1\over \lfloor{n\over
2}\rfloor} \EX_\bz\EX_\gs\Bigl\| \sum_{i=1}^{\lfloor{n\over
2}\rfloor}\gs_i X_{i ({\lfloor{n\over
2}\rfloor+i})}\Bigr\|_q \\ &  :  ={1\over \lfloor{n\over
2}\rfloor}
\EX_\bz\EX_\gs \left( \sum_{\ell, k \in \N_d}  \bigl| \sum_{i=1}^{\lfloor{n\over
2}\rfloor}\gs_i (x^k_i- x^k_{{\lfloor{n\over
2}\rfloor}+i})(x^\ell_i- x^\ell_{{\lfloor{n\over
2}\rfloor}+i}) \bigr|^q \right)^{1\over q}\\
& \le  {1\over \lfloor{n\over
2}\rfloor}\EX_\bz \left( \sum_{\ell, k \in \N_d} \EX_\gs \bigl| \sum_{i=1}^{\lfloor{n\over
2}\rfloor} \gs_i(x^k_i- x^k_{{\lfloor{n\over
2}\rfloor}+i})(x^\ell_i- x^\ell_{{\lfloor{n\over
2}\rfloor}+i})  \bigr|^q \right)^{1\over q}
 \end{array}\endeqn
where $x_i^k$ represents the $k$-th coordinate element of vector $x_i\in \R^d.$
To estimate the term on the right-hand side of inequality (\ref{eq:inter}), we apply  the Khinchin-Kahane inequality (See Lemma \ref{lem:khin} in the Appendix) with $p=2<q<\infty$ yields that
\begeqn \begin{array}{ll} & \EX_\gs\bigl| \sum_{i=1}^{\lfloor{n\over
2}\rfloor} \gs_i(x^k_i- x^k_{{\lfloor{n\over
2}\rfloor}+i})(x^\ell_i- x^\ell_{{\lfloor{n\over
2}\rfloor}+i})  \bigr|^q  \\ & \le q^{q\over 2} \bigl(\EX_\gs\bigl| \sum_{i=1}^{\lfloor{n\over
2}\rfloor} \gs_i (x^k_i- x^k_{{\lfloor{n\over
2}\rfloor}+i})(x^\ell_i- x^\ell_{{\lfloor{n\over
2}\rfloor}+i})  \bigr|^2\bigr)^{q\over 2} \\
&  = q^{q\over 2}  \bigl(\sum_{i=1}^{\lfloor{n\over
2}\rfloor}(x^k_i- x^k_{{\lfloor{n\over
2}\rfloor}+i})^2 (x^\ell_i- x^\ell_{{\lfloor{n\over
2}\rfloor}+i})^2\bigr)^{q\over 2}  \le \dsup_{x,x'\in \X }  \|x-x'\|_\infty^{2q} ( \lfloor{n\over
2}\rfloor)^{q\over 2}  q^{q\over 2}.
\end{array} \endeqn
Putting the above estimation back into (\ref{eq:inter}) and letting $q= 4\log d $ implies that
$$\begin{array}{ll} {R}_n  & \le   \dsup_{x,x'\in \X }  \|x-x'\|_\infty^{2}  d^{2 \over q} \sqrt{q} \big/ \sqrt{ \lfloor{n\over
2}\rfloor}  = 2 \dsup_{x,x'\in \X }  \|x-x'\|_\infty^{2}\sqrt{ e \log d \big / \lfloor{n\over
2}\rfloor} \\ & \le 4 \dsup_{x,x'\in \X }  \|x-x'\|_\infty^{2}\sqrt{ e \log d \big / n}.\end{array}$$
Putting the estimation for $X_\ast$ and $R_n$ into Theorem \ref{eq:gen-bound} yields inequality (\ref{eq:L1-bnd}).  This completes the proof of Example \ref{exm:L1}.
\end{proof}

\begin{example}[Mixed $(2,1)$-norm] \label{exm:21-norm} Consider $\|M \| = \sum_{\ell\in \N_d} \sqrt{\sum_{k\in \N_d}|M_{\ell k}|^2}.$  Then,  we have $X_\ast     =  \bigl[\sup_{x,x'\in \X }  \|x-x'\|_F\bigr]\bigl[\sup_{x,x'\in \X }  \|x-x'\|_\infty\bigr],$ and $$ R_n \le   4 \bigl[ \sup_{x,x'\in \X }  \|x-x'\|_\infty\bigr]\bigl[\sup_{x,x'\in \X }  \|x-x'\|_F\bigr]  \sqrt{e \log d \over n}.$$ Let $(M_\bz, b_\bz)$ be a solution of formulation (\ref{eq:model}) with mixed $(2,1)$-norm.  For any $0<\gd<1$, with probability $1-\gd$ there holds
\begeqn\label{eq:L21-bnd} \begin{array}{ll}\E(M_\bz,b_\bz) -
\E_\bz(M_\bz,b_\bz)   & \le
  2\Big(1+   {\bigl[ \sup_{x,x'\in \X }  \|x-x'\|_\infty\bigr]\bigl[\sup_{x,x'\in \X }  \|x-x'\|_F\bigr] \over \sqrt{\gl}}\Big)\sqrt{{2\ln \bigl({1\over \gd}\bigr)\over  n}} \\  & \quad ~ +  {  8\bigl[ \sup_{x,x'\in \X }  \|x-x'\|_\infty\bigr]\bigl[\sup_{x,x'\in \X }  \|x-x'\|_F\bigr] (1 + 2\sqrt{e\log d}) \over \sqrt{n\gl}}  +  { 12 \over \sqrt{n}}. \end{array}\endeqn
\end{example}
\begin{proof} The estimation of $X_\ast$ is straightforward and we estimate $R_n$  as follows.  For any $q>1$, there holds
\begeqn\label{eq:inter-21} \begin{array}{ll}   {R}_n & =  {1\over \lfloor{n\over
2}\rfloor} \EX_\bz\EX_\gs\Bigl\| \sum_{i=1}^{\lfloor{n\over
2}\rfloor}\gs_i X_{i ({\lfloor{n\over
2}\rfloor+i})}\Bigr\|_{(2,\infty)} \\ & =    {1\over \lfloor{n\over
2}\rfloor} \EX_\bz\EX_\gs \sup_{\ell\in \N_d} \left(   \sum_{k\in \N_d} \bigl| \sum_{i=1}^{\lfloor{n\over
2}\rfloor}\gs_i (x^k_i- x^k_{{\lfloor{n\over
2}\rfloor}+i})(x^\ell_i- x^\ell_{{\lfloor{n\over
2}\rfloor}+i})  \bigr|^2\right)^{1\over 2}\\
&  \le {1\over \lfloor{n\over
2}\rfloor} \EX_\bz  \left( \sum_{k\in \N_d} \EX_\gs \sup_{\ell\in \N_d}   \bigl| \sum_{i=1}^{\lfloor{n\over
2}\rfloor}\gs_i (x^k_i- x^k_{{\lfloor{n\over
2}\rfloor}+i})(x^\ell_i- x^\ell_{{\lfloor{n\over
2}\rfloor}+i})  \bigr|^2\right)^{1\over 2}.
\end{array}
\endeqn
It remains to estimate the terms inside the parenthesis on the right-hand side of the above inequality.  To this end, we observe, for any $q'>1$,  that
$$\begin{array}{ll} & \EX_\gs \sup_{\ell\in \N_d}   \bigl| \sum_{i=1}^{\lfloor{n\over
2}\rfloor}\gs_i (x^k_i- x^k_{{\lfloor{n\over
2}\rfloor}+i})(x^\ell_i- x^\ell_{{\lfloor{n\over
2}\rfloor}+i})  \bigr|^2 \\ & \le   \EX_\gs \left(\sum_{\ell\in \N_d}  \bigl| \sum_{i=1}^{\lfloor{n\over
2}\rfloor}\gs_i (x^k_i- x^k_{{\lfloor{n\over
2}\rfloor}+i})(x^\ell_i- x^\ell_{{\lfloor{n\over
2}\rfloor}+i})  \bigr|^{2q'}\right)^{1\over q'}\\
&  \le   \left(\sum_{\ell\in \N_d}  \EX_\gs \bigl| \sum_{i=1}^{\lfloor{n\over
2}\rfloor}\gs_i (x^k_i- x^k_{{\lfloor{n\over
2}\rfloor}+i})(x^\ell_i- x^\ell_{{\lfloor{n\over
2}\rfloor}+i})  \bigr|^{2q'}\right)^{1\over q'}.
\end{array}$$
Applying the Khinchin-Kahane inequality (Lemma \ref{lem:khin} in the Appendix) with $q = 2q' =4 \log d$ and $p=2$ to the above inequality yields that
$$\begin{array}{ll} & \EX_\gs \sup_{\ell\in \N_d}   \bigl| \sum_{i=1}^{\lfloor{n\over
2}\rfloor}\gs_i (x^k_i- x^k_{{\lfloor{n\over
2}\rfloor}+i})(x^\ell_i- x^\ell_{{\lfloor{n\over
2}\rfloor}+i})  \bigr|^2 \\ & \le
\left(\sum_{\ell\in \N_d}  (2q')^{q'}   \bigl[\EX_\gs \bigl| \sum_{i=1}^{\lfloor{n\over
2}\rfloor}\gs_i (x^k_i- x^k_{{\lfloor{n\over
2}\rfloor}+i})(x^\ell_i- x^\ell_{{\lfloor{n\over
2}\rfloor}+i})  \bigr|^{2}\bigr]^{q'}\right)^{1\over q'}\\
& =  \left(\sum_{\ell\in \N_d}  (2q')^{q'}    \bigl[\sum_{i=1}^{\lfloor{n\over
2}\rfloor}(x^k_i- x^k_{{\lfloor{n\over
2}\rfloor}+i})^2(x^\ell_i- x^\ell_{{\lfloor{n\over
2}\rfloor}+i})^2   \bigr]^{q'}\right)^{1\over q'}\\
& \le 2q'  \sup_{x,x'\in \X} \|x-x' \|^2_\infty  d^{1\over q'}\bigl[\sum_{i=1}^{\lfloor{n\over
2}\rfloor}(x^k_i- x^k_{{\lfloor{n\over
2}\rfloor}+i})^2\bigr]   \\ & \le 4 e(\log d)   \sup_{x,x'\in \X} \|x-x' \|^2_\infty\bigl[\sum_{i=1}^{\lfloor{n\over
2}\rfloor}(x^k_i- x^k_{{\lfloor{n\over
2}\rfloor}+i})^2\bigr]
\end{array}$$
Putting the above estimation back into (\ref{eq:inter-21}) implies that
$$ \begin{array}{ll}R_n & \le  {\sqrt{4 e\log d} \bigl[\sup_{x,x'\in \X} \|x-x' \|_\infty\bigr]\EX_\bz  \left(\sum_{i=1}^{\lfloor{n\over
2}\rfloor}\|x_i - x_{{\lfloor{n\over
2}\rfloor}+i}\|_F^2\right)^{1\over 2}\big /{\lfloor{n\over
2}\rfloor}}\\ &\le {\sqrt{4 e\log d} \bigl[\sup_{x,x'\in \X} \|x-x' \|_\infty\bigr]\bigl[\sup_{x,x'\in \X} \|x-x' \|_F\bigr]\big / \sqrt{\lfloor{n\over
2}\rfloor}} \\ & \le {4\sqrt{ e\log d}\bigl[\sup_{x,x'\in \X} \|x-x' \|_\infty\bigr]\bigl[\sup_{x,x'\in \X} \|x-x' \|_F\bigr]  \big / \sqrt{n}}. \end{array}$$
Combining this with Theorem \ref{thm:main} implies the inequality (\ref{eq:L21-bnd}). This completes the proof of the example. \end{proof}

In the Frobenius-norm case, the main term of the  bound (\ref{eq:fro-bnd}) is  $\O\bigl( {\sup_{x,x'\in \X }  \|x-x'\|_F^{2}  \over \sqrt{n  \gl}}\bigr)$.   This bound is consistent with that given by \cite{Jin} where $\sup_{x\in \X }  \|x\|_F$ is assumed to   bounded by some constant $B$.   Comparing the generalisation bounds in the above examples. The key terms  $X_\ast$ and $R_n$ mainly differ in two quantities, i.e. $\sup_{x,x'\in \X} \|x-x'\|_F$ and $\sup_{x,x'\in \X} \|x-x'\|_\infty.$  We argue that $\sup_{x,x'\in \X} \|x-x'\|_\infty$ can be much less than $\sup_{x,x'\in \X} \|x-x'\|_F.$  For instance,  consider the input space $\X = [0,1]^d.$  It is easy to see that $\sup_{x,x'\in \X} \|x-x'\|_F = \sqrt{d}$ while $\sup_{x,x'\in \X} \|x-x'\|_\infty \equiv 1.$ Consequently, we can summarise the estimations as follows:

\begin{itemize}

\item {\bf Frobenius-norm}:  $X_\ast =  d, ~~ \hbox{ and }~~ R_n \le  {2 d  \over \sqrt{n}}.$

\item {\bf Sparse $L^1$-norm}:     $ X_\ast =  1, ~~ \hbox{ and }~~ R_n \le  {4 \sqrt{e \log d}\over \sqrt{n}}.$

\item {\bf Mixed $(2,1)$-norm}:   $ X_\ast =  \sqrt{d}, ~~ \hbox{ and }~~ R_n \le  {4 \sqrt{e d\log d}\over \sqrt{n}}.$

\end{itemize}
Therefore, when $d$ is large, the generalisation bound  with sparse $L^1$-norm regularisation is much better than that with Frobenius-norm regularisation while the bound with mixed $(2,1)$-norm are between the above two. These theoretical results are nicely consistent with the rationale that sparse methods are more effective in dealing with high-dimensional data.

We end this section with two remarks. Firstly, in the setting of trace-norm regularisation, it remains a question to us on how to establish more accurate estimation of $R_n$ by using the Khinchin-Kahane inequality.   Secondly, the bounds in the above examples are true for similarity learning with different matrix-norm regularisation. Indeed, the generalisation bound for similarity learning in Theorem \ref{thm:sim-main} tells us that it suffices to estimate $\wX_\ast$ and $\wR_n$. In analogy to the arguments in the above examples, we can get the following results.  For similarity learning formulation (\ref{eq:sim-model}) with Frobenius-norm regularisation, there holds  $$\wX_\ast  = \sup_{x\in \X} \|x\|^2_F, \qquad~ \wR_n \le  {2\sup_{x} \|x\|^2_F \over \sqrt{n}}.$$ For $L^1$-norm regularisation, we have  $$\wX_\ast  = \sup_{x\in \X} \|x\|^2_\infty, \qquad ~ \wR_n \le  {4\sup_{x \in \X} \|x\|^2_\infty \sqrt{e\log d }\big/\sqrt{n}}.$$ In the setting of $(2,1)$-norm,  we obtain $$\wX_\ast  = \sup_{x\in \X} \|x\|_\infty\sup_{x\in \X} \|x\|_F, \qquad ~  \wR_n \le  {4\bigl[\sup_{x\in \X} \|x\|_F \sup_{x\in \X} \|x\|_\infty\bigr]  \sqrt{e\log d}\big/ \sqrt{n}}.$$
Putting these estimations back into Theorem \ref{thm:sim-main} yields generalisation bounds for similarity learning with  different matrix norms. For simplicity, we omit the details here.

\section{Conclusion and Discussion}\label{sec:conclusion}
In this paper we are mainly concerned with theoretical generalisation analysis of the regularized metric and similarity  learning. In particular, we first showed that the generalisation analysis for metric/similarity learning reduces to the estimation of the Rademacher average over ``sums-of-i.i.d." sample-blocks.  Then, we derived their generalisation bounds with different matrix regularisation terms.  Our analysis indicates that sparse  metric/similarity learning with $L^1$-norm regularisation could lead significantly better bounds than that with the Frobenius norm regularisation, especially when the dimension of the input data is high.   Our novel generalisation analysis  develops the techniques of U-statistics \cite{Gine,Clem} and Rademacher complexity analysis \cite{BM,KPan}.  Below we mention several questions remaining to be further studied.

Firstly, in Section \ref{sec:bounds}, the derived bounds for metric and similarity learning with trace-norm regularisation were the same as those with Frobenius-norm regularisation.  It would be very interesting to derive the bounds similar to those with sparse $\ell^1$-norm regularisation. The key issue is to estimate the Rademacher complexity term (\ref{eq:rad-emp}) related to the spectral norm using the Khinchin-Kahne inequality. However, we are not aware of such Khinchin-Kahne inequalities for general matrix spectral norms. Another alternative is to apply the advanced oracle inequalities in \cite{Koltchinskii}.

Secondly,  this study only investigated the generalisation bounds for metric and similarity learning. We can further get the consistency estimation under strong assumptions on the loss function and underlying distribution.  Specifically, we assume that the loss function is the least square loss, the matrix norm is the Frobenius norm and the bias term $b$ is fixed to be zero.  In addition, assume that the true minimizer $M_\ast = \arg\min_{M\in \mathbb{S}^d} \E(M,0)$ exists and let $M_{\bz} =\arg\min_{M\in \mathbb{S}^d} \bigl[\E_{\bf z}(M,0) + \gl \|M\|_F^2 \bigr].$ Observe that \begin{equation}\label{eq:1} \begin{array}{ll} \E(M_\bz,0)- \E(M_\ast,0) & = \iint  \langle M_{\bz}-M_\ast,  x (x')^T \rangle^2 d\rho(x)\rho(x')\\ &  = \langle \C(M_{\bz}-M_\ast), M_{\bz}-M_\ast\rangle,\end{array}\end{equation}
where $\C = \iint (x(x')^T)\otimes   (x (x')^T) d\rho(x)\rho(x')$  and $\otimes$ represents the tensor product of matrices. Equation (\ref{eq:1}) implies that $\E(M_\bz,0)- \E(M_\ast,0) = \iint  \langle M_\bz-M_\ast,  x (x')^T \rangle^2 d\rho(x)\rho(x') \ge \lambda_{\min} (\C) \|M_\bz - M_\ast\|_F^2,$ where $\lambda_{\min}(\C)$ is the minimum eigenvalue of the $d^2\times d^2$ matrix $\C.$  Furthermore, observe that $\E(M_\bz,0)- \E(M_\ast,0)$ is further bounded by
\begeqn\begin{array}{ll} & \bigl[\E(M_\bz,0) - \E_\bz(M_\bz,0) \bigr]   + \bigl[\E_\bz(M_\bz,0) + \gl \|M_{\bz}\|_F^2\bigr] - \E(M_\ast,0)\\
& \le \bigl[\E(M_\bz,0) - \E_\bz(M_\bz,0) \bigr]  + \bigl[\E_\bz(M_\ast,0) + \gl \|M_{\ast}\|_F^2\bigr] - \E(M_\ast,0) \\
& = \bigl[\E(M_\bz,0) - \E_\bz(M_\bz,0) \bigr] + \bigl[\E_\bz(M_\ast,0) - \E(M_\ast,0)\bigr] + \gl \|M_{\ast}\|_F^2, \end{array}\endeqn where the inequality follows from the definition of the minimizer $M_\bz.$
Combining equation (\ref{eq:1}) with the above estimation together implies that
\begeqn\label{eq:2} \begin{array}{ll} \lambda_{\min} (\C) \|M_\bz - M_\ast\|_F^2 & \le   \bigl[\E(M_\bz,0) - \E_\bz(M_\bz,0) \bigr] \\ & + \bigl[\E_\bz(M_\ast,0) - \E(M_\ast,0)\bigr] + \gl \|M_{\ast}\|_F^2 .\end{array}\endeqn
Using a similar argument as that for proving Theorem \ref{thm:main} and Example \ref{exm:fro},  we can get that $\bigl[\E(M_\bz,0) - \E_\bz(M_\bz,0) \bigr]+ \bigl[\E_\bz(M_\ast,0) - \E(M_\ast,0)\bigr] \le  {C \ln({2\over \gd})\over \gl \sqrt{n}}$ with a high confidence $1-\gd,$ where the constant $C$ does not depend on $\bz.$  Consequently,  putting this estimation with inequality (\ref{eq:2}) together implies that $\|M_\bz - M_\ast\|_F^2 \le {1 \over \lambda_{\min} (\C)}\Big[ C {\ln({2\over \gd})\over \gl \sqrt{n}} + \gl \|M_\ast\|_F^2\Big].$ Choosing $\gl = n^{-{1\over 4}} $ yields the consistency estimation:
$$\|M_\bz - M_\ast\|_F^2 \le  { {C\ln({2\over \gd})} + \|M_\ast\|_F^2\over \lambda_{\min} (\C)\; n^{1\over 4}}.$$
For the hinge loss, equality (\ref{eq:1}) does not hold true any more.  Hence, it remains a question on how to get the consistency estimation for metric and similarity learning with general loss functions.

Thirdly, in many applications involving multi-media data, different aspects of the data may lead to several different, and apparently equally valid notions of similarity. This leads to a natural question to combine multiple similarities and metrics for a unified data representation. An extension of multiple kernel learning approach was proposed in \cite{ML} to address this issue.  It would be very interesting to investigate the theoretical generalisation analysis  for this  multi-modal similarity learning framework.  A possible starting point would be the techniques established for learning the kernel problem \cite{Ying-COLT,Ying-NECO}.

Finally, the target of supervised metric learning is to improve the generalisation performance of kNN classifiers. It remains a challenging question to investigate how the generalisation performance of kNN classifiers relates to the generalisation bounds of metric learning given here.

\section*{Acknowledgement:}{We are grateful to the referees for their constructive comments and suggestions. This work is supported by the EPSRC under grant EP/J001384/1. The corresponding author is Yiming Ying.}

\section*{Appendix}

In this appendix we assemble some facts, which were used to  establish
generalisation bounds for metric/similarity learning.

\begin{definition} We say the function $f:\displaystyle\dprod_{k=1}^n \Omega_k \rightarrow
\mathbb{R}$ with bounded differences $\{ c_k \}_{k=1}^n $ if, for
all $1\le k \le n $, $$\begin{array}{ll} \dmax_{z_1,\cdots
,z_k,z^\prime_{k}\cdots ,z_n} & |  f(z_1, \cdots
,z_{k-1},z_k,z_{k+1}, \cdots , z_n)\\ & -f(z_1, \cdots
,z_{k-1},z^\prime_k,z_{k+1}, \cdots , z_n) |\le c_k\end{array}
$$
\end{definition}

\begin{lemma} (McDiarmid's inequality \cite{McD}) Suppose $f:\displaystyle\dprod_{k=1}^n \Omega_k \rightarrow
\mathbb{R}$ with bounded differences $ \{ c_k \}_{k=1}^n $ then ,
for all $\epsilon>0 $, there holds
$$ {\bf Pr}_{\bf z} \biggl\{ f(\bz) -\mathbb{E}_{\bf z}f({\bf z})\ge \epsilon\biggr\}\le
 e^{- \frac{2\epsilon^2}{\sum_{k=1}^n c_k^2} }.$$
\label{lem:McD} \end{lemma}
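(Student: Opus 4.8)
The plan is to prove McDiarmid's inequality by constructing the Doob martingale associated with $f$ and applying a Chernoff-type argument to its telescoping sum. First I would fix the independent coordinates and define, for each $0\le k\le n$, the conditional expectation $g_k = \mathbb{E}[f(\bz)\mid z_1,\ldots,z_k]$, so that $g_0 = \mathbb{E}_{\bf z}f(\bz)$ and $g_n = f(\bz)$. Setting $D_k = g_k - g_{k-1}$ gives the telescoping decomposition $f(\bz) - \mathbb{E}_{\bf z}f(\bz) = \sum_{k=1}^n D_k$, where each $D_k$ satisfies $\mathbb{E}[D_k\mid z_1,\ldots,z_{k-1}] = 0$ by the tower property of conditional expectation. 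This reduces the tail bound to a statement about a sum of bounded martingale differences.

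The crucial step is to show that, conditioned on $z_1,\ldots,z_{k-1}$, the difference $D_k$ lies in an interval of length at most $c_k$. To this end I would set $U_k = \sup_{w}\mathbb{E}[f\mid z_1,\ldots,z_{k-1},z_k=w] - g_{k-1}$ and $L_k = \inf_{w}\mathbb{E}[f\mid z_1,\ldots,z_{k-1},z_k=w] - g_{k-1}$, so that $L_k\le D_k\le U_k$. Because the coordinates are independent, the conditional expectation $\mathbb{E}[f\mid z_1,\ldots,z_{k-1},z_k=w]$ integrates $f$ over $z_{k+1},\ldots,z_n$ against the same product measure for every choice of $w$; hence $U_k - L_k = \sup_{w,w'}\int\bigl[f(\ldots,w,\ldots) - f(\ldots,w',\ldots)\bigr]\,d\mu_{>k}$, which is bounded by the supremum over the remaining coordinates of $|f(\ldots,w,\ldots) - f(\ldots,w',\ldots)|$, and the bounded-differences hypothesis controls this by $c_k$. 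This is the main obstacle, since it is precisely here that the independence of the $z_i$ and the coordinate-wise Lipschitz assumption must be combined: independence lets one pull the fixed integrating measure outside and merge the sup and inf into a single supremum of the difference.

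With the conditional range controlled, I would invoke the exponential Markov inequality: for any $s>0$, $\Pr\{f - \mathbb{E}f\ge\epsilon\}\le e^{-s\epsilon}\,\mathbb{E}\bigl[e^{s\sum_k D_k}\bigr]$. I then peel off the differences one at a time by iterated conditioning, applying Hoeffding's lemma — a zero-mean random variable confined to an interval of length $\ell$ has moment generating function at most $e^{s^2\ell^2/8}$ — to the innermost factor, giving $\mathbb{E}[e^{sD_n}\mid z_1,\ldots,z_{n-1}]\le e^{s^2c_n^2/8}$. Repeating for $k=n,n-1,\ldots,1$ yields $\mathbb{E}[e^{s(f-\mathbb{E}f)}]\le\exp\bigl(s^2\sum_{k=1}^n c_k^2/8\bigr)$, whence $\Pr\{f-\mathbb{E}f\ge\epsilon\}\le\exp\bigl(-s\epsilon + s^2\sum_k c_k^2/8\bigr)$. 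Optimizing over $s$ by taking $s = 4\epsilon/\sum_k c_k^2$ produces the claimed bound $\exp\bigl(-2\epsilon^2/\sum_{k=1}^n c_k^2\bigr)$. The only auxiliary fact beyond elementary conditioning is Hoeffding's lemma, whose short proof via convexity of $t\mapsto e^{st}$ I would either include or cite.
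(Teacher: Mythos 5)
Your proof is correct: the Doob-martingale decomposition $f-\mathbb{E}f=\sum_{k=1}^n D_k$, the use of coordinate independence to merge the conditional sup and inf and bound the range of each increment $D_k$ by $c_k$, the iterated conditional Hoeffding lemma giving $\mathbb{E}[e^{sD_k}\mid z_1,\ldots,z_{k-1}]\le e^{s^2c_k^2/8}$, and the optimization at $s=4\epsilon/\sum_{k=1}^n c_k^2$ all go through and yield exactly the stated tail bound. The paper states this lemma without proof, simply citing \cite{McD}, and your argument is precisely the standard bounded-differences proof found in that reference, so it matches the intended justification rather than departing from it.
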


Finally we list a useful property for U-statistics. Given the i.i.d. random variables $z_1,z_2,\ldots, z_n\in \Z,$
let $q: Z\times Z \to \R$ be a symmetric real-valued function.
Denote  a U-statistic of order two  by $U_n =  {1\over n(n-1)}
\dsum_{i\neq j} q(x_i,x_j).$ Then, the U-statistic $U_n$ can be
expressed as \begeqn\label{eq:U-stat-rep} U_n = {1\over n!}
\dsum_{\pi} {1\over \lfloor{n\over 2}\rfloor}
\dsum_{i=1}^{\lfloor{n\over 2}\rfloor}
q(z_{\pi(i)},z_{\pi(\lfloor{n\over 2}\rfloor+i)})
\endeqn
where the sum is taken over all permutations $\pi$ of
$\{1,2,\ldots,n\}.$  The main idea underlying this representation is
to reduce the analysis to the ordinary case of i.i.d. random
variable blocks. Based on the above representation, we can prove  the
following lemma which plays a critical role in deriving
generalisation bounds for metric learning. For completeness, we
include a proof here.  For more details on U-statistics, one is referred to \cite{Clem,Gine}.

\beg{lemma} \label{lem:U-stat-lem} Let $q_\tau: \Z\times \Z  \to \R$
be real-valued functions indexed by $\tau\in \T$ where $\T$ is some
index set.  If $z_1, \ldots, z_n$ are i.i.d. then we have that
$$ \EX \Bigl[\dsup_{\tau\in \T}  {1\over n(n-1)}\dsum_{i\neq j} q_\tau(z_i,z_j)\Bigr] \le \EX
 \Bigl[\dsup_{\tau\in \T} {1\over \lfloor{n\over 2}\rfloor}
\dsum_{i=1}^{\lfloor{n\over 2}\rfloor}
q_\tau(z_{i},z_{\lfloor{n\over 2}\rfloor+i})\Bigr].$$
\end{lemma}

\begin{proof}  From the representation of U-statistics (\ref{eq:U-stat-rep}),  we observe that
$$\beg{array}{ll}  \EX \Bigl[\dsup_{\tau\in \T}  {1\over n(n-1)}\dsum_{i\neq j}
q_\tau(z_i,z_j)\Bigr] & =  \EX\dsup_\tau {1\over n!} \dsum_{\pi}
{1\over \lfloor{n\over 2}\rfloor} \dsum_{i=1}^{\lfloor{n\over
2}\rfloor}
q_\tau(z_{\pi(i)},z_{\pi(\lfloor{n\over 2}\rfloor+i)}) \\
&\le {1\over n!}  \EX \dsum_{\pi} \dsup_\tau {1\over \lfloor{n\over
2}\rfloor} \dsum_{i=1}^{\lfloor{n\over 2}\rfloor}
q_\tau(z_{\pi(i)},z_{\pi(\lfloor{n\over 2}\rfloor+i)})  \\
& = {1\over n!}  \dsum_{\pi} \EX  \dsup_\tau {1\over \lfloor{n\over
2}\rfloor} \dsum_{i=1}^{\lfloor{n\over 2}\rfloor}
q_\tau(z_{\pi(i)},z_{\pi(\lfloor{n\over 2}\rfloor+i)})\\
& =\EX
 \Bigl[\dsup_{\tau\in \T} {1\over \lfloor{n\over 2}\rfloor}
\dsum_{i=1}^{\lfloor{n\over 2}\rfloor}
q_\tau(z_{i},z_{\lfloor{n\over 2}\rfloor+i})\Bigr].
\end{array}$$
This  completes the proof of the lemma.
\end{proof}

We need the following contraction  property of the Rademacher
averages which  is essentially implied by Theorem 4.12 in Ledoux and
Talagrand \cite{LT},  see also \cite{BM,KPan}.

\begin{lemma}\label{lem:contr-prop}Let $F$ be a class of uniformly bounded real-valued
functions on $(\gO,\mu)$ and $m\in\N$. If for each $i\in \{1,
\ldots, m\}$, $\Psi_i: \R \to \R$ is a function with $\Psi_i(0)=0$
having a Lipschitz constant $c_i$, then for any $\{x_i\}_{i=1}^m$,
\begeqn \EX_\epsilon\Big( \dsup_{f\in F} \big|\dsum_{i=1}^m
\epsilon_i \Psi_i(f(x_i)) \big|\Big)
 \le 2   \EX_\epsilon\Big(  \dsup_{f\in F}
 \Big|\dsum_{i=1}^m c_i\epsilon_i
f(x_i) \big| \Big).\label{contr}\endeqn
\end{lemma}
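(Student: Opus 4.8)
The plan is to follow the classical contraction (comparison) principle for Rademacher sums, which is essentially Theorem~4.12 in Ledoux and Talagrand \cite{LT}; for completeness I would give a direct argument. First I would reduce to Lipschitz constant one. If $c_i=0$ then $\Psi_i$ is constant, and since $\Psi_i(0)=0$ the $i$-th term vanishes on both sides, so such indices are discarded. For $c_i>0$ set $\phi_i(t)=\Psi_i(t/c_i)$; then $\phi_i$ is $1$-Lipschitz with $\phi_i(0)=0$ and $\phi_i(c_i f(x_i))=\Psi_i(f(x_i))$. Writing $g_i(f)=c_i f(x_i)$, the claim becomes $\EX_\epsilon \dsup_{f\in F}|\dsum_i \epsilon_i \phi_i(g_i(f))|\le 2\,\EX_\epsilon \dsup_{f\in F}|\dsum_i \epsilon_i g_i(f)|$, so it suffices to treat $1$-Lipschitz maps vanishing at $0$.

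The engine is the factor-one, no-absolute-value comparison: for any bounded $T\subseteq\R^m$ and $1$-Lipschitz $\phi_i$, $\EX_\epsilon \dsup_{t\in T}\dsum_i\epsilon_i\phi_i(t_i)\le \EX_\epsilon\dsup_{t\in T}\dsum_i\epsilon_i t_i$. I would prove this by peeling off one coordinate at a time: conditioning on $(\epsilon_j)_{j\neq i}$ and averaging over $\epsilon_i\in\{\pm1\}$, the conditional expectation of the supremum equals $\tfrac12\dsup_{t,t'}[u(t)+u(t')+\phi_i(t_i)-\phi_i(t'_i)]$, where $u(\cdot)$ collects the untouched coordinates. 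Using $\phi_i(t_i)-\phi_i(t'_i)\le|t_i-t'_i|$ together with the symmetry of the joint supremum under the swap $t\leftrightarrow t'$, this is bounded by $\tfrac12\dsup_{t,t'}[u(t)+u(t')+t_i-t'_i]$, which is exactly the same conditional expectation with $\phi_i$ replaced by the identity on coordinate $i$. Taking expectations over the remaining signs and iterating over $i=1,\dots,m$ gives the factor-one bound; note that $\phi_i(0)=0$ is \emph{not} used here.

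The main obstacle, and precisely where the factor $2$ is produced, is passing from the plain supremum to the absolute value. Writing $S_+(\epsilon)=\dsup_{f}\dsum_i\epsilon_i\phi_i(g_i(f))$ and $S_-(\epsilon)=\dsup_{f}\dsum_i\epsilon_i(-\phi_i(g_i(f)))$, one has $\dsup_{f}|\dsum_i\epsilon_i\phi_i(g_i(f))|=\max(S_+,S_-)$; since $S_++S_-\ge0$ at least one summand is nonnegative, so $\max(S_+,S_-)\le S_+^{+}+S_-^{+}$ with $x^+=\max(x,0)$. Because $S_-(\epsilon)=S_+(-\epsilon)$ and $-\epsilon$ has the same law as $\epsilon$, taking expectations yields $\EX_\epsilon\dsup_{f}|\dsum_i\epsilon_i\phi_i(g_i(f))|\le 2\,\EX_\epsilon S_+^{+}$. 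Finally I would realise $S_+^{+}$ as a genuine supremum by augmenting the index set with the zero function: here $\Psi_i(0)=0$ enters, ensuring the appended element contributes the zero vector, so that $S_+^{+}=\dsup_{f\in F\cup\{0\}}\dsum_i\epsilon_i\phi_i(g_i(f))$. Applying the factor-one comparison to this augmented class bounds $\EX_\epsilon S_+^{+}$ by $\EX_\epsilon\dsup_{f\in F\cup\{0\}}\dsum_i\epsilon_i g_i(f)\le \EX_\epsilon\dsup_{f}|\dsum_i\epsilon_i g_i(f)|$, and combining the two estimates delivers the stated factor-two inequality. The delicate points are the absolute-value bookkeeping (the reduction $\max(S_+,S_-)\le S_+^{+}+S_-^{+}$, the symmetry step, and the zero-function augmentation that consumes the hypothesis $\Psi_i(0)=0$), rather than the one-coordinate peeling, which is routine.
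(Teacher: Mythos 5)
Your proposal is correct, but it is worth pointing out that the paper itself contains no proof of this lemma: it is quoted as ``essentially implied by Theorem 4.12 in Ledoux and Talagrand \cite{LT}'', so your argument is in effect a self-contained reconstruction of the textbook result rather than a parallel to anything in the paper. The route you take is the classical one and every step checks out: (i) the rescaling $\phi_i(t)=\Psi_i(t/c_i)$, $g_i(f)=c_if(x_i)$ correctly reduces to $1$-Lipschitz contractions (and discarding indices with $c_i=0$ is legitimate since then $\Psi_i\equiv 0$); (ii) the factor-one comparison without absolute values via one-coordinate peeling is sound, including the key use of the Lipschitz bound $\phi_i(t_i)-\phi_i(t'_i)\le|t_i-t'_i|$ together with the symmetry of the joint supremum under $t\leftrightarrow t'$; (iii) the absolute-value bookkeeping is right: $\max(S_+,S_-)\le S_+^++S_-^+$ holds because $S_++S_-\ge 0$, the distributional symmetry $\epsilon\sim-\epsilon$ gives $\EX S_-^+=\EX S_+^+$, and realising $S_+^+$ as a supremum over $F\cup\{0\}$ is exactly where $\Psi_i(0)=0$ enters, after which the factor-one comparison and the bound $\sup_{f\in F\cup\{0\}}\sum_i\epsilon_i c_if(x_i)\le\sup_{f\in F}\bigl|\sum_i\epsilon_i c_if(x_i)\bigr|$ finish the job. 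Your remark that $\Psi_i(0)=0$ is not needed for the factor-one comparison but only for the absolute-value step is accurate and is a genuine clarification. What your write-up buys is self-containedness and a precise accounting of which hypothesis produces which constant; what the paper's citation buys is generality, since Theorem 4.12 of \cite{LT} allows an arbitrary convex increasing function applied to (half) the supremum, of which the lemma here is the identity special case.
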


The last property of Rademacher averages is the Khinchin-Kahne inequality (see e.g. \cite[Theorem 1.3.1]{Gine}).

\begin{lemma}\label{lem:khin} For $n\in \N$,  let $\{f_i \in \R:  i\in \N_n\} $, and $\{\gs_i: i\in \N_n\}$ be a family of i.i.d. Rademacher variables.  Then, for any $1<p<q<\infty$ we have
$$ \left(\EX_\gs  \bigl|\sum_{i\in \N_n}\gs_i f_i \bigr|^q\right)^{1\over q} \le  \left( {q-1 \over p-1}\right)^{1\over 2} \left(\EX_\gs  \bigl|\sum_{i\in \N_n}\gs_i f_i \bigr|^p\right)^{1\over p}  $$

\end{lemma}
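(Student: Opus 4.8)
The plan is to derive this moment-comparison estimate from the hypercontractivity of the noise operator on the discrete cube, which produces the sharp constant $\bigl((q-1)/(p-1)\bigr)^{1/2}$ with no further loss. View $\gs=(\gs_1,\ldots,\gs_n)$ as uniform on $\{-1,+1\}^n$ and, for $0\le \gr\le 1$, let $T_\gr$ be the operator acting on the Fourier--Walsh expansion $h=\dsum_{S\subseteq\N_n}\widehat{h}(S)\dprod_{i\in S}\gs_i$ by $T_\gr h=\dsum_{S\subseteq\N_n}\gr^{|S|}\widehat{h}(S)\dprod_{i\in S}\gs_i$. The linear form $L(\gs)=\dsum_{i\in\N_n}\gs_i f_i$ is homogeneous of degree one, hence an eigenfunction of $T_\gr$ with eigenvalue $\gr$, that is $T_\gr L=\gr L$. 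The estimate I aim to establish is the hypercontractive bound
$$\bigl(\EX_\gs|T_\gr h|^q\bigr)^{1/q}\le\bigl(\EX_\gs|h|^p\bigr)^{1/p}\qquad\text{whenever }\ \gr\le\sqrt{(p-1)/(q-1)},$$
valid for every function $h$ on the cube. Applying it to $h=L$ with the extremal value $\gr=\sqrt{(p-1)/(q-1)}$ gives $\gr\bigl(\EX_\gs|L|^q\bigr)^{1/q}\le\bigl(\EX_\gs|L|^p\bigr)^{1/p}$, and dividing by $\gr$ yields precisely the asserted inequality.

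Thus the substance of the proof is the hypercontractive bound, and its crux is the one-variable ($n=1$) base case: for all $a,b\in\R$ and $\gr=\sqrt{(p-1)/(q-1)}$,
$$\bigl(\tfrac12|a+\gr b|^q+\tfrac12|a-\gr b|^q\bigr)^{1/q}\le\bigl(\tfrac12|a+b|^p+\tfrac12|a-b|^p\bigr)^{1/p}.$$
This two-point inequality is the main obstacle. I would reduce to $a=1$ by homogeneity (the case $a=0$ being immediate) and to $b\ge0$ by symmetry, so that it suffices to show $\varphi(b)\le\psi(b)$, where $\varphi(b)=\bigl(\tfrac12|1+\gr b|^q+\tfrac12|1-\gr b|^q\bigr)^{p/q}$ and $\psi(b)=\tfrac12|1+b|^p+\tfrac12|1-b|^p$. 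Both are even with $\varphi(0)=\psi(0)=1$, and a second-order expansion gives $\psi(b)=1+\tfrac{p(p-1)}{2}b^2+O(b^4)$ and $\varphi(b)=1+\tfrac{p(q-1)}{2}\gr^2b^2+O(b^4)$; the defining choice $\gr^2=(p-1)/(q-1)$ makes these quadratic terms coincide, so the inequality holds to leading order. Promoting this infinitesimal statement to all $b\in[0,1]$ and then to $b>1$ is the delicate step, which I would handle by a convexity/calculus analysis of $\psi-\varphi$, or equivalently by the classical integral representation through which the one-variable hypercontractive inequality is usually established.

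Finally I would tensorise to all of $\{-1,+1\}^n$ by induction on $n$. Factoring $T_\gr=T_\gr^{(n)}\circ T_\gr^{\otimes(n-1)}$ and writing $h$ as a function of $\gs_n$ with coefficients depending on $\gs'=(\gs_1,\ldots,\gs_{n-1})$, I would apply the one-variable bound in the $\gs_n$ slot and the inductive hypothesis in the $\gs'$ slots, combining them through Minkowski's integral inequality for mixed $L^p$--$L^q$ norms; this exchange in the order of integration is permissible precisely because $q\ge p$. The resulting $n$-variable hypercontractive bound, fed into the eigenfunction reduction of the first paragraph, closes the proof. The only genuinely nontrivial ingredient is the two-point inequality; the tensorisation and the passage to linear functions are soft.
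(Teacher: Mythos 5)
The paper contains no proof of this lemma to compare against: it is stated in the appendix and imported verbatim from \cite[Theorem 1.3.1]{Gine}. Your hypercontractive argument is, in substance, exactly the classical proof behind that citation, and its architecture is sound: a Rademacher linear form is a degree-one Walsh chaos, so $T_\gr L = \gr L$; the Bonami--Beckner bound at the extremal parameter $\gr=\sqrt{(p-1)/(q-1)}$ then produces the constant exactly; the tensorisation by induction with Minkowski's integral inequality (legitimate precisely because $q\ge p$) correctly reduces everything to the two-point case; and your second-order verification of that case is right, as is the $a=0$ degeneracy. The one place where the proposal is a plan rather than a proof is the step you yourself flag: promoting the two-point inequality from its infinitesimal form to all $b$ over the full range $1<p\le q$. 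Be aware that this is heavier than the sketch suggests: a head-on calculus attack is genuinely unpleasant for $p<2$, and the tempting shortcut of composing through $L^2$ via the semigroup law $T_{\gr_1}T_{\gr_2}=T_{\gr_1\gr_2}$ (using $(p,2)$ by duality, since $p'-1=1/(p-1)$, then $(2,q)$) only covers $p\le 2\le q$ with the sharp constant --- it does not reach $2<p<q$ or $q<2$. Fortunately $p=2$ is the only case the paper ever invokes (in Examples \ref{exm:L1} and \ref{exm:21-norm}), and there a fully elementary completion exists: one needs only $\tfrac12|1+s|^q+\tfrac12|1-s|^q\le\bigl(1+(q-1)s^2\bigr)^{q/2}$ for $q\ge2$, a term-by-term binomial comparison for even integer $q$ and a short convexity argument in general; if one also wants $1<p<2\le q$, H\"older gives $\EX|L|^2\le\|L\|_p\,\|L\|_{p'}$, hence $\|L\|_2\le(p-1)^{-1/2}\|L\|_p$, which chains with $\|L\|_q\le\sqrt{q-1}\,\|L\|_2$ to the stated constant. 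So: correct route, essentially identical to the cited source, with Bonami's two-point lemma still owed in full generality.
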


\end{document}